\setlist[itemize]{leftmargin=1cm}
\setlist[enumerate]{leftmargin=1cm}
\newtheorem{theorem}{Theorem}[section]
\newtheorem{lemma}[theorem]{Lemma}
\newtheorem{definition}[theorem]{Definition}
\newtheorem{assumption}[theorem]{Assumption}
\theoremstyle{remark}
\newtheorem{remark}[theorem]{Remark}
\newcommand{\R}{\mathbb{R}}
\newcommand{\modelname}{RNN-ODE-Adap}
\newcommand{\Tr}{\text{Tr}}
\newcommand{\Te}{\text{Te}}
\begin{document}

\title{Neural Differential Recurrent Neural Network with \\
Adaptive Time Steps}

\author[1]{Yixuan~Tan}
 \author[2]{Liyan~Xie}
\author[1]{Xiuyuan~Cheng\thanks{Email: xiuyuan.cheng@duke.edu.}}

\affil[1]{\small
Department of Mathematics, Duke University}

\affil[2]{\small
School of Data Science, The Chinese University of Hong Kong, Shenzhen}

\date{\vspace{-20pt}}
\maketitle

\begin{abstract}
The neural Ordinary Differential Equation (ODE) model has shown success in learning complex continuous-time processes from observations on discrete time stamps. In this work, we consider the modeling and forecasting of time series data that are non-stationary and may have sharp changes like spikes. We propose an RNN-based model, called \textit{\modelname}, that uses a neural ODE to represent the time development of the hidden states, and we adaptively select time steps based on the steepness of changes of the data over time so as to train the model more efficiently for the ``spike-like'' time series. Theoretically, \textit{\modelname} yields provably a consistent estimation of the intensity function for the Hawkes-type time series data. We also provide an approximation analysis of the RNN-ODE model showing the benefit of adaptive steps. The proposed model is demonstrated to achieve higher prediction accuracy with reduced computational cost on simulated dynamic system data and point process data and on a real electrocardiography dataset. 
\end{abstract}

\section{Introduction}

We consider the modeling and forecasting of time series characterized by {\it irregular} time steps and {\it non-stationary} patterns, which are commonly observed in various applications, such as finance \cite{gonzalez2012time} and healthcare \cite{cheng2015time}. 
We treat the data as a sequence of ordered observations from an unknown underlying continuous-time process, sampled at discrete time points. Recurrent Neural Network (RNN) \cite{rumelhart1986learning} is frequently employed to model such sequential data. This work proposes to use an RNN-based model with a neural Ordinary Differential Equation (ODE) to fit time series data. 

The classical neural ODE type approaches typically assume {\it regular} time grids in the data sequences, or the {\it same} irregular time grids across different sequences \cite{rubanova2019latent}.  
When dealing with highly non-stationary data, such as a time series with sudden spikes, it becomes imperative to select sufficiently small time steps to ensure accurate modeling of regions with steep changes. These regions require more refined time steps, especially the ones with abrupt spikes. However, it is common for most of the time horizon to observe a slow-varying and less steep time series, which is more ``flat'' over time; thus, the refined time steps would result in unnecessarily high computational costs. Some examples of time series with abrupt spikes are shown in Figure\,\ref{fig:spikes}, illustrating a spectrum ranging from continuous time series to discontinuous time series such as counting processes.

To train the neural ODE for data with non-stationary patterns more effectively,
we propose an approach that employs {\it adaptive time steps} in the neural ODE model, which we refer to as \textit{\modelname}. The model adaptively selects the time steps based on the local variation of the time series, enabling it to capture underlying trends with potentially fewer steps. Our numerical experiments showed that, compared to other baseline models that use regular time steps, \textit{\modelname} could achieve higher prediction accuracy with similar or lower time complexity.
The contribution of the work is as follows.

\begin{itemize}
    \item
    Based on a neural ODE model characterizing the dynamics of hidden states,
    we propose an algorithm to construct adaptive time steps, which assigns refined time steps to data around ``spikes'' while using rough time steps for data in ``flat'' segments. This can significantly reduce the computational cost in the training process with little impact on the modeling performance.
    
    \item We provide theoretical insights into the consistency of the model using the example of Hawkes process type data, and the approximation guarantee of the RNN-ODE model that illuminates the benefits of adaptive time steps.
    
    \item We conduct numerical experiments on both synthetic data and a real-world time-series data set to demonstrate the advantage of the proposed algorithm in terms of both modeling accuracy and computational efficiency. 
\end{itemize}

\begin{figure}[t]
    \centering
    \includegraphics[width=0.875\textwidth]{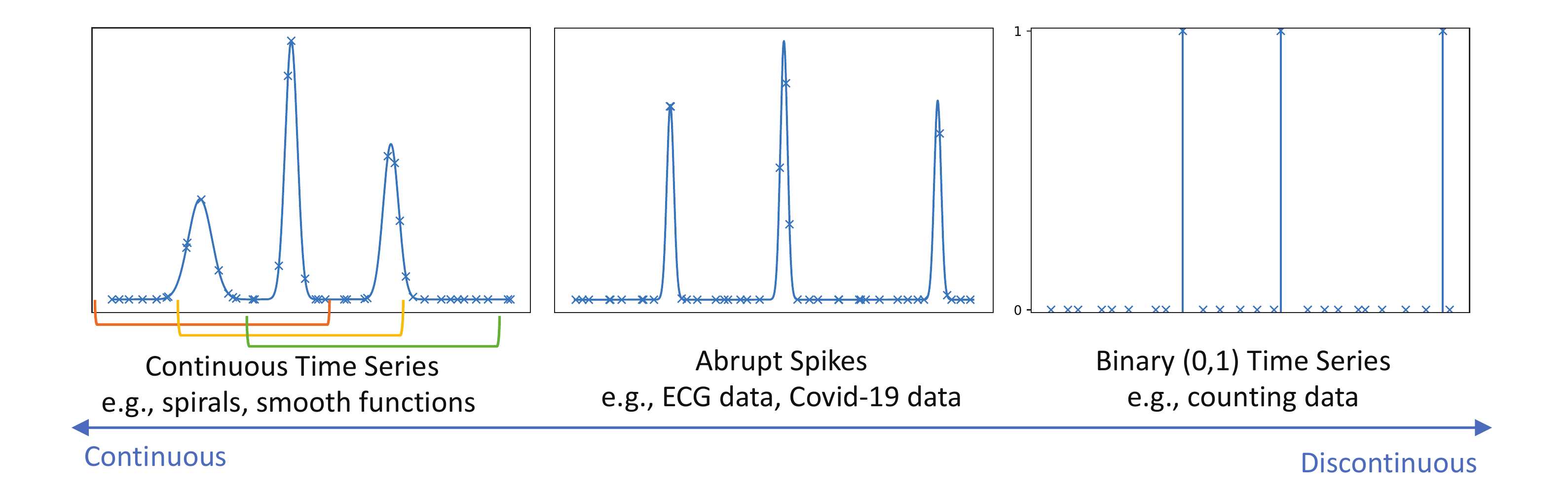}
    \caption{Illustration of spike-like time series. The crosses denote the discretely sampled time steps, which can be irregular.  In the left panel, the subsequences enclosed with the orange, yellow, and green brackets represent the (training or testing) windows generated from this sequence.}
    \label{fig:spikes}
\end{figure}

\subsection{Related Works} 
\paragraph{Neural ODE.}
Our work is closely related to the neural ODE \cite{chen2018neural} model, which parameterizes the derivative of the hidden state using a neural network. 
In \cite{chen2018neural}, a generative time-series model was proposed, which takes the neural ODE as the decoder.
Furthermore, \cite{rubanova2019latent} proposed a non-generative model with continuous-time hidden dynamics to handle irregularly sampled data based on \cite{chen2018neural}. 
Compared with existing works related to neural ODE \cite{rubanova2019latent, weinan2017proposal,zhang2019anodev2, dong2020towards, lu2018beyond, kidger2020neural, morrill2021neural, habiba2020neural, greydanus2021piecewise}, we model the ODE that determines the progression of hidden states by including the data itself in the derivative of the hidden state. 
In contrast to existing works on non-stationary environments such as the piecewise-constant ODE \cite{greydanus2021piecewise}, our work proposes to use adaptive time steps to automatically adapt to sparse spikes in the time series, without pre-defining the time period for each piece of ODE.

\paragraph{Neural CDE.}
 We note that the Neural Controlled Differential Equation (CDE) \cite{kidger2020neural_app} also incorporates the observations into the model continuously. Specifically, the hidden states in \cite{kidger2020neural_app} follow the CDE $h(t) = h(t_0) + \int_{t_0}^t f_\theta(h(s))\mathrm{d}X_s$, where the integral is a Riemann-Stieltjes integral. 
We would like to emphasize some key differences between model \eqref{eq:ode} and Neural CDE. The $X_s$ in Neural CDE is the natural cubic spline of $\{(x(t_i), t_i)\}_{i}$, and $f_\theta:\R^{d_{h}} \to \R^{d_{h}\times (D+1)}$, where $d_h$ is the number of hidden units and $D$ is the data dimension. Thus, for the same number of hidden units, Neural CDE requires a more complex parameterized $f_\theta$ to model $h(t)$. Moreover, since $X_s$ is obtained by cubic spline, it is less naturally adapted to the prediction task that requires extrapolation to the time stamps not seen when computing the spline. 
 Therefore, it is hard to evaluate the prediction performance of Neural CDE and thus we defer the evaluation under the Neural CDE setting for future work.

\paragraph{Continuous-Time RNNs.}  Our model belongs to the extensive family of continuous-time RNNs, originating from \cite{rosenblatt1961principles}. Several existing studies explore various RNN architectures, 
such as \cite{chang2019antisymmetricrnn, kag2020rnns, erichson2020lipschitz, rusch2020coupled, kag2021time}
These RNN models leverage their structures to address the exploding and vanishing gradient problem. 
Our model also adopts a continuous-time ODE framework for time series data, 
and the proposed adaptive time stamp selection method can be viewed as effectively reducing the length of the discrete sequence when a significant part of the process is changing slowly. 
Meanwhile, our approach can also be used concurrently with the methodologies such as in \cite{erichson2020lipschitz}. 
As the focus of our work is to model the ``spike-like'' time series data, the combination of our model and the existing continuous-time RNN models can further improve the efficiency when applied to such data.

\paragraph{Time Adaptivity.} Previous studies have investigated the incorporation of time adaptivity in continuous-time RNNs, such as GACTRNN \cite{heinrich2020learning}, TARNN \cite{kag2021time}, and LEM \cite{rusch2021long}. In these works, time adaptivity was incorporated 
by multiplying the ODE with an adaptively learned time modulator, usually parametrized by another sub-network.
In contrast, our method adaptively selects time steps during the preprocessing phase, where the selection process only utilizes the steepness of change of the time series data. 
Therefore, the proposed model does not involve the training of a sub-network for the time modulator as in the previous models, which may incur an increase in model size and additional computational costs.

\section{Problem Setup}\label{sec:setup}

\subsection{Training Data and Prediction Task}

Consider a random continuous time series $x(t)\in\R^D$ over the time horizon $[0,T]$ for some $T\in\R^+$. 
We observe multiple independent and identically distributed samples of the continuous process $x(t)$, where each sample is sampled at discrete time stamps, which can vary across different samples.
We split the observed sequences into training and testing sequences. From the training sequences, we generate a total of $K^{(\Tr)}$ training windows, denoted as $\{\boldsymbol{x}^{(\Tr,k)}\}_{k=1}^{K^{(\Tr)}}$, each of window length $N$ as our training data (see the left panel of Figure \ref{fig:adaptive_demo} for an illustrative example). Here $\boldsymbol{x}^{(\Tr,k)}\coloneqq \{ x^{(\Tr,k)}(t_{1}^{(\Tr, k)}),\ldots,x^{(\Tr,k)}(t_{N}^{(\Tr, k)})\}$ and $0<t_{1}^{(\Tr, k)}<\cdots<t_{N}^{(\Tr, k)}\leq T$ are the corresponding time stamps for the $k$-th training window. Similarly, we create $K^{(\Te)}$ testing windows of length $N$ from the testing sequences, denoted as $\{\boldsymbol{x}^{(\Te,k)}\}_{k=1}^{K^{(\Te)}}$. To simplify the notation, we may drop the superscripts and write $\{x(t_1),x(t_2),\ldots,x(t_N)\}$ for a given training window if it does not cause confusion.

Our goal is to make predictions based on historical data. 
Given a historical series $x(t_{1}), \dots, x(t_{n})$, we aim to perform either one-step or multi-step predictions. 
The one-step prediction involves predicting $x(t_{n+1})$ at a single future time $t_{n+1}$ based on $\{x(t_1) \dots, x(t_{n})\}$, while the $m$-step prediction includes forecasting $\{x(t_{n+1}), \dots, x(t_{n+m})\}$ at future times $t_{n+1}<\cdots<t_{n+m}$.
The detailed formulas for measuring the prediction accuracy are provided in Appendix \ref{sec:detail}.

\subsection{Training Objective}\label{sec:train-obj}

Given the time horizon $[0,T]$, recall that we have a collection of training windows with the $k$-th one denoted as   $\boldsymbol{x}^{(\Tr,k)}= \{ x^{(\Tr,k)}(t_{1}^{(\Tr,k)}),\ldots,x^{(\Tr,k)}(t_{N}^{(\Tr,k)})\}$. Here, the time steps are allowed to be {\it heterogeneous} for different training windows.
We train the model parametrized by neural networks (see Section \ref{sec:model} for the neural ODE model adopted in this paper) with trainable parameters $\Theta$ using the {\it mean-squared regression loss} function
\begin{equation}\label{eq:training_loss1}
\mathcal{L}(\Theta; \{\boldsymbol{x}^{(\Tr,k)}\}_{k=1}^{K^{(\Tr)}} ) = \sum_{k=1}^{K^{(\Tr)}}\sum_{i=1}^{N}  \| \hat{x}^{(\Tr,k)}(t_{i}^{(\Tr,k)}) - x^{(\Tr,k)}(t_{i}^{(\Tr,k)}) \|^2 |t_{i}^{(\Tr,k)} - t_{i-1}^{(\Tr,k)}|,   
\end{equation}
where $\Theta$ are the network parameters, $\hat x^{(k)}(t)$ is the output of the neural ODE model under parameter $\Theta$ conditioned on all past observation, and $t_{0}^{(\Tr,k)}$ is the added initial time stamp for each window.

The time difference term $|t_{i}^{(\Tr,k)} - t_{i-1}^{(\Tr,k)}|$ ensures that the empirical mean-squared error loss \eqref{eq:training_loss1} matches the $\ell_2$ loss for function estimation. This term will be important to balance the fitting errors among time intervals with different time steps. 
This term would be necessary for the proposed scheme with adaptive (non-uniform) time steps. In our numerical examples, we also performed an ablation study regarding this term 
to demonstrate its necessity; see Figure \ref{fig-Hawkes-const-deltat} in Appendix \ref{sec:add-exp-results} for an example.

\section{Method}\label{sec:method}
 
We state the neural ODE model used for the hidden dynamic in Section \ref{sec:model}. 
The algorithm for adaptive steps  is introduced in Section \ref{sec:adaptive},
and the computational complexity is explained in Section \ref{subsec:comp-cost}.
More implementation details, such as evaluation metrics and choice of thresholds, are given in Appendix \ref{sec:detail}.

\subsection{Neural ODE for RNN model}\label{sec:model}

To be able to model the observation $x(t)$ as a function of a hidden value $h(t)$, we follow the previous continuous-time RNN neural-ODE approach
\cite{chang2019antisymmetricrnn,erichson2020lipschitz} 
to model the hidden dynamics of $h(t)$  as
\begin{equation}\label{eq:ode}
h'(t) = f( h(t), x(t); \theta_h),   
\end{equation}
where $f$ is a neural network parameterized by $\theta_h$. 
If one directly adopts the neural ODE model \cite{chen2018neural} to the hidden state $h(t)$, the ODE model would be   
$  h'(t) = f(h(t),t;\theta)$
without the observed time series data $x(t)$. 
In contrast, the model \eqref{eq:ode} incorporates the observed incoming time series data $x(t)$ as an input to $f$, which is important for modeling the time series data especially when the underlying dynamics is non-stationary. 
The time evolution of the observed series $x(t)$ is modeled by an output neural network $g$ that maps the hidden value $h(t)$ to $x(t)$ as 
\begin{equation}\label{eq:output}
    \hat{x}(t) = g ( h(t); \theta_d ),
\end{equation}
where $g$ is called the output function parameterized by $\theta_d$.

Given the neural network functions $f$ and $g$ (which generally can adopt any architecture) and the observed time series data $x(t)$, from any initial input $h(0)$, we can numerically solve the RNN neural ODE model \eqref{eq:ode} to obtain the $h$ values at any time $t\in(0,T)$ as $h(t) = h(0) + \int_{0}^t f( h(s), x(s); \theta_h) ds$,
and then predict the value of $x(t)$ by $\hat{x}(t)= g ( h(t); \theta_d )$. 
The neural ODE integration can be solved by existing first-order or higher-order schemes, and the back-propagation can be computed by the adjoint method \cite{pontryagin1987mathematical, chen2018neural}.
If one uses the forward Euler scheme, the discrete-time dynamic of $h(t)$ (after incorporating the time step into the network function $f$) becomes $h_{i+1} = h_i + f(h_i,\theta_i)$, which recovers the structure of Residual networks \cite{rumelhart1985learning,hewamalage2021recurrent}.
In this work, we adopt the forward Euler scheme in experiments due to its better stability than higher-order schemes when the dynamic has steep changes. Our methodology of adaptive time grids  can potentially be extended to higher-order differential schemes.

\subsection{Adaptive Time Steps}\label{sec:adaptive}

We propose to learn a neural-ODE RNN model using adaptive time stamps, and thus the method is called \textit{\modelname}.
The construction of adaptive time steps is summarized in Algorithm \ref{alg_adaptive}.
The intuition behind the proposed algorithm is to assign longer (rough) time intervals during time regions where the time series is slowly time-varying (such as ``flat'' curves), while assigning shorter (fine) time intervals during those regions with ``spikes'' (highly non-stationary and fast time-varying regimes). 
For constructing the adaptive time stamps, we assume the initial time grid is sufficiently fine and adopt a dyadic-partition type algorithm to be detailed as follows. 

Given a raw (discrete-time) training window $x(t_0),x(t_1),\dots,x(t_{N})$ sampled at the finest level of the time stamps $0\leq t_0<\cdots< t_N\leq T$.  
For simplicity, below we write it as $x_0,x_1,x_2,\ldots,x_N$.
Without loss of generality, we assume $N$ is a power of two. 
We first define a monitor function $M(\cdot)$ that measures the variation of the sub-sequence $\{x_i,\ldots,x_j\}$, $i<j$. In this paper, we mainly adopt the {\it maximum variation} defined as
\begin{equation}\label{eq:variation}
    M(\{x_i,\ldots,x_j\}) := \max_{i+1 \leq k \leq j} \frac{\| x_{k} - x_{k-1} \|_2}{|t_k-t_{k-1}|} ,
\end{equation}
which captures the maximum variation among any two adjacent time stamps. Here we may also choose $\ell_p$ norms for any $p\geq 1$. 

\begin{figure}[t]
    \centering    
    % \vspace{-4em}
    \includegraphics[width=0.5\linewidth]{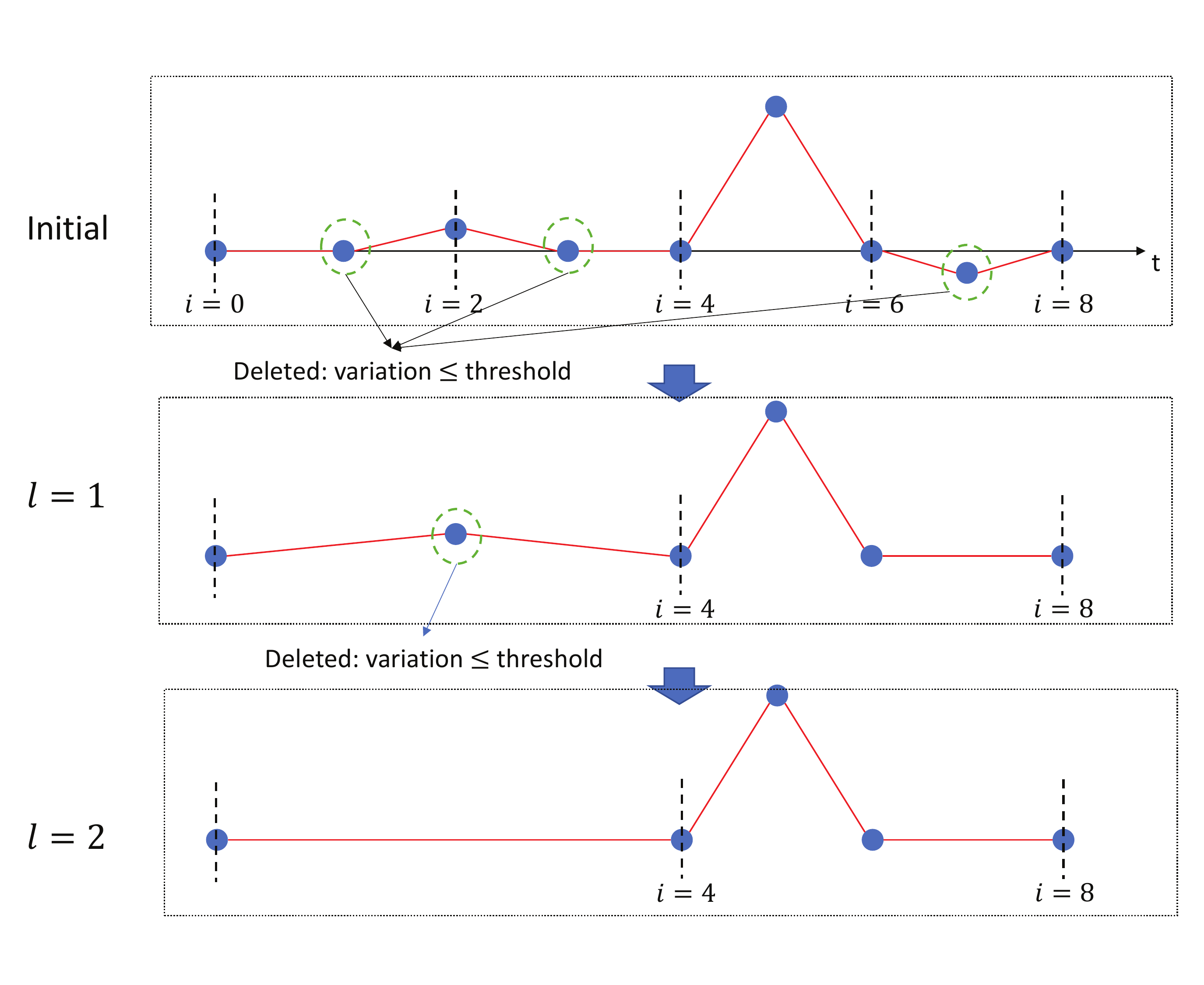}
    % \vspace{-1em}
    \caption{Illustration of adaptive time steps resulted from Algorithm 
    \ref{alg_adaptive}. In this example, $N=8$ and $L=2$; 
    three samples are removed in phase $l=1$, and one sample is removed in phase $l=2$.}
    \label{fig:adaptive_demo}

\end{figure}

We then screen from the finest level of time grids and adaptively merge neighboring time grids if their maximum variation is below a pre-specified threshold $\epsilon>0$. 
In detail, for the first level $l=1$, we group the original $N$ time intervals into $N/2$ sub-intervals (as demonstrated in Figure \ref{fig:adaptive_demo}) and each sub-interval contains three consecutive time stamps: $\{x_0,x_1,x_2\}, \{x_2,x_3,x_4\}, \ldots, \{x_{N-2},x_{N-1},x_N\}$. Then we calculate the maximum variation $M(x_0,x_1,x_2), \ldots, M(x_{N-2},x_{N-1},x_N)$ for each sub-interval. We 
then merge the two consecutive time intervals into one, i.e., remove the middle time stamp $x_{2n+1}$, for $n=0,1,\ldots,N/2-1$, if
\[\label{eq:flag}
M(x_{2n},x_{2n+1},x_{2n+2}) < \epsilon.
\]
In other words, we only keep the time stamps on which the maximum variation exceeds $\epsilon$. 

\begin{algorithm}[t]
\caption{
A dyadic algorithm for selecting adaptive time steps.
}
\label{alg_adaptive}
\begin{algorithmic}[1]
   \STATE {\bfseries Input:} Data series $\{x_0,x_1,x_2,\ldots,x_N\}$; threshold $\epsilon>0$; $L\in \mathbb{Z}_{+}$.  
   \STATE {\bf Initialize}:  $\mathcal D = \emptyset$. A flag vector 
  $\mathtt{Flag}=\{0,0,\ldots,0\}$ of length $N$.
   \FOR{$l=1$ {\bfseries to} $L$}
   \STATE Define a new flag: $\mathtt{Flag_{new}}=\{0,0,\ldots,0\}$ of length $\lfloor N/2^l\rfloor$.
   \FOR{$i = 1$ {\bfseries to} $ \lfloor N/2^l \rfloor$}
   \IF{$\mathtt{Flag}[2(i-1)+1]=\mathtt{Flag}[2i]=0$}
   \STATE Compute the monitoring function $M(\{x_{2^l(i-1)},x_{2^l(i-1)+2^{l-1}},x_{2^li}\})$. 
   % \STATE $\rhd\;$ e.g. the maximum variation defined in \eqref{eq:variation} 
   \IF{$M < \epsilon$}
   \STATE %Merge $\{x_{2^l(i-1)},x_{2^l(i-1)+2^{l-1}},x_{2^li}\}$ into one time step: 
   $\mathcal D = \mathcal D \cup (2^l(i-1)+2^{l-1}).$
   \ELSE
   \STATE Mark $\mathtt{Flag_{new}}[i]=1$.
   \ENDIF
   \ELSE
   \STATE Mark $\mathtt{Flag_{new}}[i]=1$.
   \ENDIF
   \ENDFOR
   \STATE Update $\mathtt{Flag} = \mathtt{Flag_{new}}$.
    \ENDFOR   
   \STATE {\bfseries Output:}  Indexes of removed time steps $\mathcal D$. 
\end{algorithmic}
\end{algorithm}

The above selection procedure is repeated similarly for $l=2,3,\ldots$ until a pre-specified maximum integer $L$. 
The value $L$ corresponds to the roughest time interval. The algorithm is detailed in Algorithm \ref{alg_adaptive}, in which we maintain a set $\mathcal D$ that characterizes which time stamps to be removed. Meanwhile, we also keep a $\mathtt{Flag}$ vector in each round as an indicator of whether the midpoint time stamp was removed in the {\it last} round and $\mathtt{Flag_{new}}$ indicating whether the middle time stamp will be removed in the {\it current} round. 
The elements in the $\mathtt{Flag}$ vector equals $0$ for intervals with slight variation ($M(\cdot)\leq \epsilon$) and $1$ otherwise. 
The primary usage of such $\mathtt{Flag}$ vector is that for two consecutive intervals in round $l'$, we only merge the intervals if both of them are slow-varying intervals (i.e., merged from a previous round $l<l'$).

The output of Algorithm \ref{alg_adaptive} is the set $\mathcal D$ of time stamps to be removed. The model is trained on the remaining time steps only. We provide an illustration in Figure \ref{fig:adaptive_demo} of the algorithm for selecting adaptive time steps. From the final results in Figure \ref{fig:adaptive_demo}, it can be seen that the output of the adaptive time steps uses longer time steps to model stationary periods (from $i=0$ to 4 and from $i=6$ to 8), and uses {\it shorter} time steps to model {\it spikes} (from $i=4$ to 6) in the sequence.

\subsection{Computational Complexity}\label{subsec:comp-cost}
The computational complexity of applying Algorithm \ref{alg_adaptive} in the preprocessing stage to $K^{(\Tr)}$ training windows is $O(K^{(\Tr)}ND)$, where $D$ is the data dimension. For the neural ODE model described as in \eqref{eq:ode}-\eqref{eq:output}, when $f$ possesses the same network structure as a vanilla RNN with $d_{h}$ hidden units and $g$ is a one-layer fully connected network, the complexity in the training process is $O(n_{e}K^{(\Tr)}\bar{N}_{a}d_{h}(d_{h}+D))$, where $n_{e}$ and $\bar{N}_{a}$ represent the number of training epochs and the average length of the adaptive windows, respectively.

Since the computational cost in the training process usually dominates that in the preprocessing step (which happens as long as $n_{e}d_{h} \geq 2^L$), the overall complexity of the \textit{\modelname} model is $O(n_{e}K^{(\Tr)}\bar{N}_{a}d_{h}(d_{h}+D))$. This is of the same order as the complexity of training a vanilla RNN (we refer to Appendix \ref{sec:net-struct} for the specific structure) with $d_{h}$ hidden units in $n_{e}$ epochs, using $K^{(\Tr)}$ training windows with the same length $\bar{N}_{a}$. 
Therefore, compared with the complexity when training with the original finest $N$ time grids,
the complexity associated with the adaptive method will be reduced by a factor of {$\bar{N}_{a}/N$.} {The smallest achievable complexity will be reduced by a factor of $1/2^L$ when choosing a sufficiently large threshold $\epsilon$. }

\section{Theory}\label{sec:theory}

In this section, we provide the recovery consistency of the training objective (Section \ref{subsec:theory-consistency}) and approximation error guarantee of the RNN-ODE model revealing the benefit of adaptive step size (Section \ref{subsec:approx-analysis}).
All proofs can be found in Appendix \ref{app:proofs}.

\subsection{Function Estimation for Event-type Data}\label{subsec:theory-consistency}

We present the theoretical analysis
for function estimation based on the proposed model under counting-type time series. 
It is worthwhile noting that counting-type time series represent a special class of continuous-time models since they exemplify the {\it extreme} case of ``spike-like'' data, where we have discontinuities from zero to one, as shown at the right end of Figure\,\ref{fig:spikes}.

For event-type sequences, the raw data contains a list of event times $0<t_1<t_2<\ldots<t_n<T$ 
on the time horizon $[0,T]$. Each timestamp %$t_i$ 
is the time when an event happens. 
In practice, the estimation is performed on {\it discrete-time grids}.
Define the counting process $N(t):= \sum_{i=1}^n \mathbf{1}( t_i \leq t)$ as the total number of events happened before time $t$. We convert such continuous-time data into discrete observations by discretizing the time interval $[0,T]$ into $M$ intervals of equal length $\Delta t= T/M$, and then let $x_m = N( m\Delta t) -  N((m-1) \Delta t)$, $m=0,1,\ldots,M$ (by convention $x_0=0$). 
When $\Delta t$ is chosen sufficiently small, it becomes the Bernoulli process where $x_i \in \{0,1\}$. 

We consider the temporal Hawkes processes \cite{rasmussen2011temporal}, in which the values $x_i$ are mostly zero under mild assumptions, corresponding to sparse ``spikes''. Such temporal Hawkes processes can be characterized by its {\it conditional intensity} function defined as
$$
\lambda^*(t) = \lim_{\Delta\rightarrow0}\Delta^{-1}\mathbb{E}[N(t+\Delta)-N(t)|\mathcal F_{t}],
$$
where the filtration $\mathcal F_{t}$ stands for the information available up to time $t$. 
In the case of Hawkes processes, $\lambda(t)= \mu + \alpha \int_0^t \phi(t-s)dN(s)$ is simply a linear function of past jumps of the process,
where $\phi(\cdot)$ is the influence kernel. For example, under the special case of exponential kernels, the intensity function becomes $\lambda^*(t) = \mu + \alpha \beta \int_{0}^{t} e^{-\beta (t-\tau)} dN(\tau)$.

The intensity function recovery consistency by minimizing least-square population loss
is proved in Theorem \ref{prop-error}
under a memory constraint.
We parameterize the function by a neural network (NN) based structure characterized as in \eqref{eq:ode}-\eqref{eq:output}. We define the prototypical network architecture below.

\begin{definition}\label{def:RNN-ODE}
Define the function class 
$\textit{NN-ODE}(d_{\mathrm{out}},L_h,p_h,L_d,p_d)$ as 
\begin{equation}\label{eq:functionODERNN}
\begin{aligned}
&\textit{NN-ODE}(d_{\mathrm{out}},L_h,p_h,L_d,p_d) \coloneqq \{  F: \mathbb R \mapsto \mathbb R^{d_{\mathrm{out}}}| F(t) = g(h(t)),  h'(t) = f(h(t),x(t)), \\
& \hspace{25pt}  g \text{ is NN with $L_d$ layers and max-width $p_d$},  h \text{ is NN with $L_h$ layers and max width $p_h$}.\}
\end{aligned}
\end{equation}
\end{definition}

\begin{theorem}\label{prop-error}
Assume there exist $d$ buffer time steps with samples $x_{-d},\ldots,x_{-1}$ prior to the Hawkes count data $\{x_0,\ldots,x_M\}$ and each time step has duration $\Delta t = T/M$. We further assume $\textit{NN-ODE} (d_{\mathrm{out}}, L_h,p_h, L_d,p_d)$ is rich enough to model the true intensity function.    
Then the minimizer $F^*$ to the 
population loss function
\[
\Psi(F) \coloneqq \sum_{m=1}^M \mathbb E\big[ ( x_m - F(m \Delta t)\Delta t)^2 \big| x_{m-d}\ldots x_{m-1}\big],\]
optimized within the neural network class $F \in \textit{NN-ODE}(D,L_h,p_h,L_d,p_d)$, satisfies $F^*(m\Delta t) = \tilde \lambda(m) := \frac{1}{\Delta t}\int_{(m-1)\Delta t}^{m\Delta t} \lambda^*(t)dt$, which is the discretized intensity. 
\end{theorem}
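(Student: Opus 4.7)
The plan is to reduce the statement to two classical ingredients: (i) for squared loss, the optimal predictor is the conditional expectation, and (ii) for a point process with predictable intensity $\lambda^*$, the compensated process is a martingale. The richness of the NN-ODE class supplies the required realizability that lets us match the optimum pointwise.

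First, I would argue that the minimization decouples across $m$. Under the time-discretized NN-ODE recursion (e.g., forward Euler applied to \eqref{eq:ode}), the value $F(m\Delta t) = g(h(m\Delta t))$ is a measurable function of $(x_{-d},\ldots,x_{m-1})$. Since $\Psi(F)$ is a sum whose $m$-th summand depends on $F$ only through this single value, with conditioning $\sigma$-algebra $\mathcal{G}_m := \sigma(x_{m-d},\ldots,x_{m-1})$, it suffices to minimize each summand separately. The richness assumption on $\textit{NN-ODE}(D,L_h,p_h,L_d,p_d)$ implies that every $\mathcal{G}_m$-measurable square-integrable function can be realized in the NN-ODE class (the buffer of $d$ samples ensures sufficient context to build up the hidden state for every index $m \in \{1,\dots,M\}$).

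For the $m$-th summand, the standard identity $\arg\min_{c \in L^2(\mathcal{G}_m)} \mathbb{E}[(x_m - c)^2 \mid \mathcal{G}_m] = \mathbb{E}[x_m \mid \mathcal{G}_m]$ pins down
\[
F^*(m\Delta t)\,\Delta t = \mathbb{E}[x_m \mid \mathcal{G}_m].
\]
To identify the right-hand side with $\tilde\lambda(m)\Delta t$, I would invoke the martingale property of the compensated counting process: $N(t) - \int_0^t \lambda^*(s)\,ds$ is an $\mathcal{F}_t$-martingale, whence
\[
\mathbb{E}[x_m \mid \mathcal{F}_{(m-1)\Delta t}] = \mathbb{E}\Big[\int_{(m-1)\Delta t}^{m\Delta t} \lambda^*(s)\,ds \,\Big|\, \mathcal{F}_{(m-1)\Delta t}\Big].
\]
Under the memory assumption that the $d$-step buffer fully encodes the predictable content of $\mathcal{F}_{(m-1)\Delta t}$ relevant to the Hawkes intensity, the tower property collapses this to $\tilde\lambda(m)\Delta t$, matching the claim.

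The main obstacle will be this final identification. Although $x_m$ is conditioned on the finite history $\mathcal{G}_m$, the Hawkes intensity $\lambda^*(t)$ for $t \in ((m-1)\Delta t, m\Delta t]$ depends on events occurring within the interval itself, so literally writing $\mathbb{E}[x_m \mid \mathcal{G}_m]$ as $\int_{(m-1)\Delta t}^{m\Delta t} \lambda^*(s)\,ds$ requires interpreting $\tilde\lambda(m)$ as the appropriate $\mathcal{G}_m$-measurable conditional average rather than a pathwise integral, and justifying that the $d$-step buffer is adequate for the kernel in use (an exponential kernel, for instance, admits an exact finite-dimensional Markovian sufficient statistic, which the buffer can recover). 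Laying out this interpretation carefully is where most of the work sits; the separability step and the conditional-mean optimality are then routine given the richness assumption on the NN-ODE class.
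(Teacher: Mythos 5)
Your proposal is correct and follows essentially the same route as the paper: both decouple the loss over $m$, use the conditional-mean characterization of the squared-loss minimizer (the paper does this by expanding the square), and identify $\mathbb{E}[x_m \mid \text{history}]$ with $\tilde\lambda(m)\Delta t$ under the richness assumption, with your appeal to the compensator martingale being just a different justification of the same moment identity the paper asserts from Poisson properties. The measurability subtlety you flag --- that the pathwise integral $\int_{(m-1)\Delta t}^{m\Delta t}\lambda^*(s)\,ds$ is not $\mathcal{F}_{(m-1)\Delta t}$-measurable for a self-exciting process, so $\tilde\lambda(m)$ must be read as a conditional average --- is genuine and is in fact glossed over in the paper's own proof, so your treatment is, if anything, the more careful one.
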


\begin{remark}\label{rmk:theory}
    We have the recovered intensity function $F^*(m\Delta t) = \tilde \lambda(m)$ and is extendable to the entire time horizon as $F^*(t) = F^*(m\Delta t) \mathbf{1}\{(m-1)\Delta t < t \leq m\Delta t\}$ for any $t\in[0,T]$. 
    In Appendix \ref{app:intensity-proof} we extend the analysis to show that under the asymptotic scenario when $M\rightarrow \infty$%($\Delta t = T/M \rightarrow 0$)
    , we have that $\int_0^T |F^*(t) - \lambda^*(t)|dt \rightarrow 0$. 
\end{remark}

The above argument is made under the population loss, showing that using the least-square loss function can recover the actual intensity function for the Hawkes process. It is mainly due to the generality of the ODE model \eqref{eq:ode} and \eqref{eq:output}, which is consistent with Hawkes process and most time series models.
The argument may be extended to empirical processes by utilizing the empirical concentration of the process. 

\subsection{Approximation Analysis of \modelname}
\label{subsec:approx-analysis}

For theoretical generality, in this subsection, we consider the continuous-time process $y(t)\in\R^{D'}$ satisfying 
\begin{equation}\label{eq:dyn-sys}
h'(t)=f(h(t),x(t)), \quad y(t)=g(h(t)), 
\quad h(0) = h_0, 
\quad t\in [0,T],    
\end{equation}
where $x(t)\in\R^D$ is the observable input data and $h(t)\in\R^{d_h}$ is the underlying hidden process from some initial value $h_0$. 
Taking $y(t) $ to be $x(t)$ reduces the model to the case \eqref{eq:ode}\eqref{eq:output} considered in the other parts of the work. 
We provide two theorems: Theorem \ref{thm:continuous-error} proves the uniform approximation to $y(t)$ by continuous-time RNN-ODE model without time discretization;
Theorem \ref{thm:discret-error} further takes into account the discrete-time scheme and obtains the approximation on a time grid.

\paragraph{Approximation of the continuous-time model.}
We will use neural network functions $f_\theta$ and $g_\phi$ to approximate the functions $f$ and $g$, respectively, see Lemma \ref{lemma:func-approx}.
Given $x(t)$ on $[0,T]$, let $h_{\rm NN}(t)$ be the solution to the hidden-process ODE 
$h_{\text{NN}}'(t)=f_\theta(h_{\text{NN}}(t),x(t))$
from $h_{\rm NN}(0) = h_0$. This leads to the output process $y_{\rm NN}(t)$ defined by
\begin{equation}\label{eq:dyn-sys-approx}
h_{\text{NN}}'(t)=f_\theta(h_{\text{NN}}(t),x(t)), \quad y_{\text{NN}}(t)=g_\phi(h_{\text{NN}}(t)), 
\quad h_{\rm NN}(0) = h_0,
\quad t\in [0,T].
\end{equation}
The approximation of $y_{\rm NN}(t)$ to $y(t)$ will be based on the approximation of $f_\theta$ and $g_\phi$, which calls for the regularity condition of the system \eqref{eq:dyn-sys}. 
We take the following technical conditions.
\begin{assumption}\label{assump1}
\begin{enumerate}[label=(A\arabic*)]%[label={(\Alph*)}]
    \item 
    The observed process $x:[0,T]\to [-1,1]^D$ and is Lipschitz continuous over $t$;
    The hidden process $h:[0,T]\to [-1,1]^{d_h}$.
    \item $f:[-1.1,1.1]^{d_h}\times [-1,1]^D\to \R^{d_h}, (\eta,x)\mapsto f(\eta,x)$, and is Lipschitz continuous with respect to both $\eta$ and $x$.
    \item $g:[-1.1,1.1]^{d_h}\to[-1,1]^{D'}, \eta\mapsto g(\eta)$ is Lipschitz continuous.
\end{enumerate}    
\end{assumption}

We let $L_g$ denote the global Lipschitz constant of $g$ on $[-1.1,1.1]^{d_h}$.  For $f$, both global and local Lipschitz constants on the domain $[-1.1,1.1]^{d_h}\times [-1,1]^D$ are used. More detailed definitions of these constants will be introduced in Lemma \ref{lemma:func-approx} (for the global constant) and Theorem \ref{thm:continuous-error} (for the local constant).

The next lemma directly follows by applying \cite{yarotsky2017error} to the case where $f$ and $g$ have 1st-order regularity (Lipschitz continuity). The proof is given in appendix \ref{app:approx-proof}. 

\begin{lemma}\label{lemma:func-approx}
For any $\epsilon_f, \epsilon_g > 0$, 
there exist neural networks $f_\theta, g_\phi$ such that 
\begin{equation}\label{eq:func-approx}
    \max_{\eta\in [-1.1,1.1]^{d_h}, x\in [-1,1]^D} \left\|f(\eta,x) - f_\theta(\eta,x)\right\|_2 < \epsilon_f, \max_{\eta\in [-1.1,1.1]^{d_h}} \left\|g(\eta) - g_\phi(\eta)\right\|_2 < \epsilon_g,
\end{equation}
and 
\begin{itemize}
    \item $f_\theta$ has $O(\ln\frac{C_f}{\epsilon_f}+\ln d_h+1)$ layers and $ O( ({C_f}/{\epsilon_f})^{{d_h+D}}(\ln\frac{C_f}{\epsilon_f}+\ln d_h+1))$ trainable parameters.
    \item $g_\phi$ has $O(\ln\frac{C_g}{\epsilon_g}+\ln D'+1)$ layers and $O(({C_g}/{\epsilon_g})^{d_h}(\ln\frac{C_g}{\epsilon_g}+\ln D'+1))$ trainable parameters.
\end{itemize}
The constants in big-$O$ may depend on $D, D'$, and $d_h$. Here $C_f \coloneqq \max \{ L^{f,h}, L^{f,x}, M_f\}$, where $M_f= \sup_{(\eta,x)\in [-1.1,1.1]^{d_h}\times[-1,1]^D} \|f(\eta,x)\|$ and $L^{f,h}, L^{f,x}$ are denote the Lipschitz constant of $f$ on $[-1.1,1.1]^{d_h}\times [-1,1]^D$ (see formal definitions in \eqref{eq:def-L^fh} in the proof of Lemma \ref{lemma:func-approx} in Appendix \ref{app:approx-proof}). $C_g\coloneqq \max\{L_g,M_g\}$, and $M_g= \sup_{\eta\in [-1.1,1.1]^{d_h}} \|g(\eta)\|$.
\end{lemma}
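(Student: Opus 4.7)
\textbf{Proof plan for Lemma \ref{lemma:func-approx}.}
The plan is to reduce the statement to the approximation theorem for Lipschitz functions from \cite{yarotsky2017error}, applied componentwise to the vector-valued targets $f$ and $g$. Yarotsky's theorem, in the first-order (Lipschitz) case, states that any Lipschitz function $F:[0,1]^n \to \mathbb{R}$ with Lipschitz constant $L$ and sup-norm bounded by $M$ can be approximated to uniform error $\epsilon$ by a ReLU network with $O(\ln((L\vee M)/\epsilon)+1)$ layers and $O((L/\epsilon)^n \cdot (\ln((L\vee M)/\epsilon)+1))$ trainable parameters. So the strategy is essentially bookkeeping: normalize the input domain to a unit cube, apply Yarotsky componentwise, stack the subnetworks in parallel, and convert the componentwise $\ell_\infty$ bound into the desired Euclidean bound.

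First, I would address $f$. Under Assumption \ref{assump1} the domain $[-1.1,1.1]^{d_h}\times[-1,1]^D$ is an axis-aligned box, so an affine change of variables maps it to $[0,1]^{d_h+D}$ while preserving Lipschitz continuity up to a constant factor (which can be absorbed into the definition of $C_f$). For each coordinate $i \in \{1,\ldots,d_h\}$, the scalar function $f_i$ is Lipschitz with constant at most $C_f$ (using both $L^{f,h}$ and $L^{f,x}$ to cover the joint dependence on $(\eta,x)$) and is sup-norm bounded by $M_f \leq C_f$. I would apply Yarotsky's theorem with target tolerance $\epsilon_f/\sqrt{d_h}$ so that
\begin{equation*}
    \|f_i - (f_\theta)_i\|_\infty < \epsilon_f/\sqrt{d_h}, \quad i=1,\ldots, d_h,
\end{equation*}
which implies $\|f(\eta,x) - f_\theta(\eta,x)\|_2 < \epsilon_f$ for every $(\eta,x)$ in the domain. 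Each coordinate subnetwork has $O(\ln(C_f/\epsilon_f) + \ln d_h + 1)$ depth and $O((C_f/\epsilon_f)^{d_h+D}(\ln(C_f/\epsilon_f)+\ln d_h+1))$ parameters. Stacking the $d_h$ subnetworks in parallel (sharing the input layer) produces a single ReLU network $f_\theta$ of the same depth and of parameter count inflated by at most a factor of $d_h$, which is absorbed into the big-$O$ constant that is allowed to depend on $d_h$. The construction for $g_\phi$ is completely analogous on the domain $[-1.1,1.1]^{d_h}$, using $C_g = \max\{L_g, M_g\}$, and aiming for componentwise tolerance $\epsilon_g/\sqrt{D'}$.

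The only technicalities to watch are: (i) Yarotsky's theorem is usually stated on $[0,1]^n$, so one must verify that the affine renormalization only rescales $L$ and $M$ by a bounded factor, which is harmless since the constants are swallowed into $C_f$, $C_g$ and the big-$O$ constants; (ii) the dependence $\ln d_h$ and $\ln D'$ in the depth comes from compensating for the $\sqrt{d_h}$ and $\sqrt{D'}$ factors in the per-coordinate tolerance; and (iii) the local Lipschitz constants $L^{f,h}$, $L^{f,x}$ should be used in forming $C_f$ to obtain the sharpest rate. There is no real obstacle here beyond careful accounting. The substantive work of proving Lipschitz approximability is entirely done by Yarotsky's theorem; what the lemma adds is the translation to the particular setting of $f$ and $g$ arising from the RNN-ODE system \eqref{eq:dyn-sys}, with the constants $C_f, C_g$ defined precisely so that the statement is clean. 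Consequently, the detailed proof in Appendix \ref{app:approx-proof} should consist of a one-paragraph reduction plus the componentwise stacking argument.
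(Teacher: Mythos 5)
Your proposal is correct and follows essentially the same route as the paper's proof: rescale the box domain (the paper uses $\tilde f(\tilde\eta,x)=f(1.1\tilde\eta,x)$ to map onto $[-1,1]^{d_h}\times[-1,1]^D$), apply Yarotsky's first-order ($n=1$) approximation theorem coordinatewise with tolerance $\epsilon_f/\sqrt{d_h}$ (resp.\ $\epsilon_g/\sqrt{D'}$), stack the subnetworks in parallel, and pass from the componentwise bound to the Euclidean one. The only cosmetic difference is that the paper tracks the factor $1.1$ explicitly in the rescaled Lipschitz and sup-norm bounds rather than absorbing it notationally.
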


For the spike-like data, the majority of the regions are slow-varying, with the spikes occupying only a minor part of the whole interval $[0,T]$. Thus, the whole interval $[0,T]$ may be partitioned into two disjoint sets $\mathcal{D}_1$ and $\mathcal{D}_2$, each of which consisting of unions of disjoint intervals in $[0,T]$. To characterize this partition more precisely, we define the constants related to an interval in $[0,T]$ as follows:

\begin{itemize}
\item For an interval $[s,t]\subset [0,T]$, we define the domains $B^h, B^x$ as
\begin{equation}\label{eq:def-local-balls}
B^h: = (h([s,t]) + B_r^{d_h}) \subset [-1.1, 1.1]^{d_h},
\quad B^x: = (x([s,t]) + B_r^{D}) \cap [-1, 1]^{D},    
\end{equation}
with $r=0.1$, and $B_r^{d_h}, B_r^{D}$ represent balls with radius $r$ in $\R^{d_h}, \R^{D}$ respectively (see Figure \ref{fig:x_h_demonstrate} for illustration). Here, $h([s,t]) + B_r^{d_h}$ means the Minkowski addition, namely $\{h_1+h_2, h_1\in h([s,t]), h_2\in B_r^{d_h}\}$, and $x([s,t]) + B_r^{D}$ is defined in the same way. 
Then, we denote
\begin{align}\label{eq:def-local-Lipschitz}
L_{[s,t]}^{f,h} &\coloneqq \sup_{x\in B^x}\sup_{\eta_1,\eta_2\in B^h} \frac{\|f(\eta_1,x)-f(\eta_2,x)\|}{\|\eta_1-\eta_2\|},   \nonumber\\ L_{[s,t]}^{f,x} &\coloneqq \sup_{h\in B^h}\sup_{x_1,x_2\in B^x} \frac{\|f(\eta,x_1)-f(\eta,x_2)\|}{\|x_1-x_2\|}, 
\end{align}
as the local Lipschitz constants of $f$ within the domain $B^h\times B^x$, and
\begin{equation}\label{eq:def-local-max-f}
M^f_{[s,t]} \coloneqq \sup_{( \eta, x )\in B^h \times  B^x} \|f(\eta,x)\|_2.    
\end{equation} 

\end{itemize}

\begin{figure}[t]
    \centering    
    % \vspace{-4em}
    \includegraphics[width=0.75\linewidth]{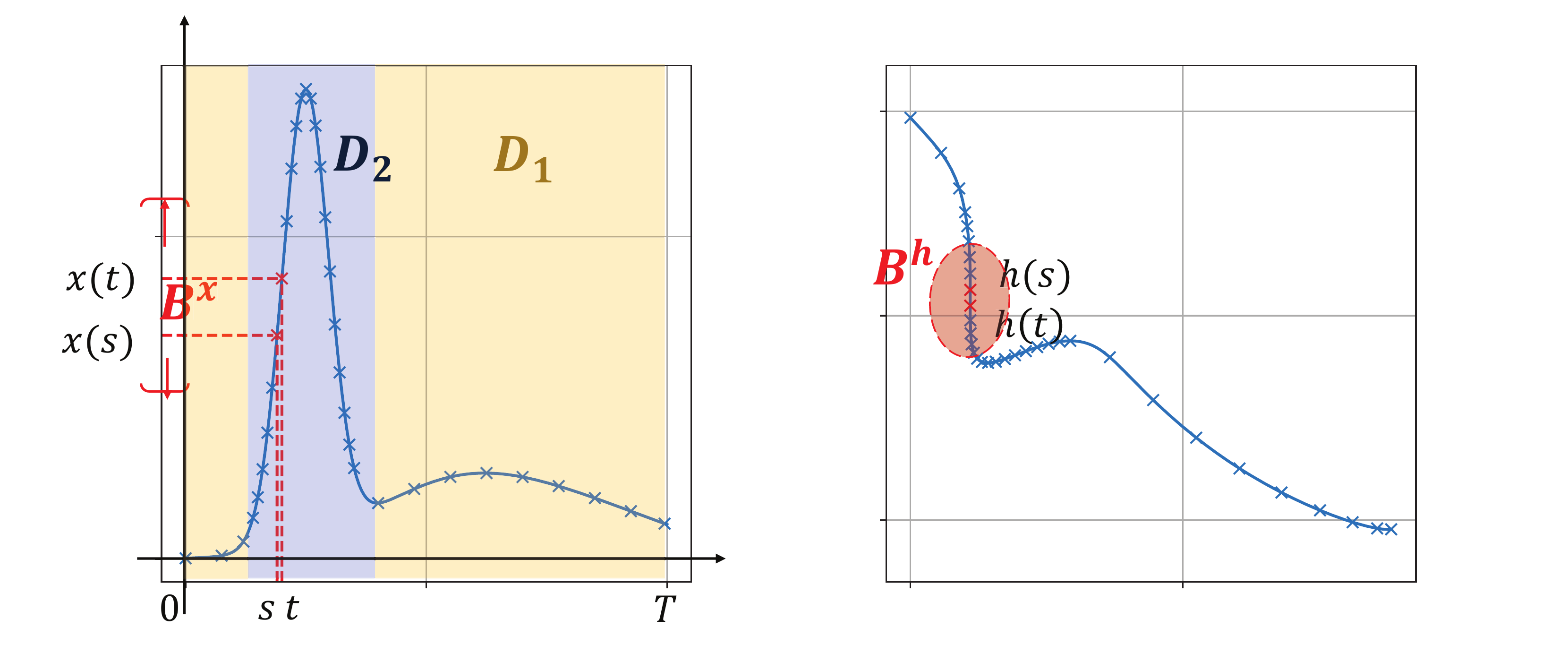}
    % \vspace{-1em}
    \caption{Demonstration of the domains $B^x$ and $B^h$ defined as in \eqref{eq:def-local-balls} for the time interval $[s,t]$ (here $d_h=2, D=1$). The domains $\mathcal{D}_1$ and $\mathcal{D}_2$ that correspond to slowly and fast varying regions are colored in orange and blue respectively.
    % \xc{the definition on $[s_{i-1}, s_{i}]$ is similarly }
    }
    \label{fig:x_h_demonstrate}

\end{figure}
With the local Lipschitz constant defined as above, we suppose that any time grid $[s_1,t_1]$ in $\mathcal{D}_1$ corresponds to a local Lipschitz constant $L_{[s_1,t_1]}^{f,h}\leq L_{\rm low}$. On contrast, if $[s_2,t_2]$ belongs to $\mathcal{D}_2$, the local Lipschitz constant $L_{\rm low}<L_{[s_2,t_2]}^{f,h}\leq L_{\rm high} (\leq L^{f,h})$. Here, $\mathcal{D}_1$ is comprised of regions with slow variations, while $\mathcal{D}_2$ encompasses regions with sharp changes, as demonstrated in Figure \ref{fig:x_h_demonstrate}. It may often be the case that $|D_1|$ is greater than $|D_2|$. Then, we define
\begin{equation}\label{eq:def-L-avg}
L^{\rm (avg)}\coloneqq \frac{1}{T} ( L_{\rm low} |D_1| + L_{\rm high} |D_2|).    
\end{equation}
Following Lemma \ref{lemma:func-approx} and the partition described above, Theorem \ref{thm:continuous-error} below provides the approximation results for the continuous-time process $y(t)$ using \eqref{eq:dyn-sys-approx}.

\begin{theorem}\label{thm:continuous-error} 
Under Assumption \ref{assump1} and for $L^{(\rm avg)}$ defined as in \eqref{eq:def-L-avg}, suppose $\epsilon_f, \epsilon_g>0$ and $\epsilon_f$ satisfies
% \xc{this condition is only about epsf right? why epsg need to satisfy?}
\begin{equation} 
Te^{L^{(\rm avg)}T} \epsilon_f < 0.1, 
\end{equation}
and let $f_\theta, g_\phi$ 
be the neural networks satisfying \eqref{eq:func-approx} (the model complexity is bounded as in Lemma \ref{lemma:func-approx}), then
\begin{equation}\label{eq:cts-err-analysis}
\max_{t\in[0,T]} \|y(t)-y_{\text{NN}}(t)\| < \epsilon_g + L_g Te^{L^{(\rm avg)} T} \epsilon_f.
\end{equation}
\end{theorem}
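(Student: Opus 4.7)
The plan is to decompose $\|y(t)-y_{\rm NN}(t)\|$ into an output-map piece (controlled directly by $\epsilon_g$) and a hidden-state propagation piece (controlled by a Grönwall-type argument on $\|h(t)-h_{\rm NN}(t)\|$ that exploits the \emph{piecewise local} Lipschitz constants of $f$ rather than the global constant $L^{f,h}$). The assumption $Te^{L^{(\rm avg)}T}\epsilon_f<0.1$ is precisely what closes a bootstrap keeping $h_{\rm NN}(s)$ inside the local balls $B^h$ on which those local constants are valid.

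First, by the triangle inequality and the global Lipschitz continuity of $g$ on $[-1.1,1.1]^{d_h}$ with constant $L_g$, together with the uniform bound $\|g(\eta)-g_\phi(\eta)\|<\epsilon_g$ from Lemma \ref{lemma:func-approx}, one has $\|y(t)-y_{\rm NN}(t)\| \le L_g\|h(t)-h_{\rm NN}(t)\|+\epsilon_g$ as soon as $h_{\rm NN}(t)\in[-1.1,1.1]^{d_h}$. Hence it suffices to establish $\|h(t)-h_{\rm NN}(t)\| \le T e^{L^{(\rm avg)}T}\epsilon_f$ for all $t\in[0,T]$, which will simultaneously give the required containment.

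Second, subtracting the integral form of the two hidden ODEs from the common initial condition $h(0)=h_{\rm NN}(0)=h_0$ and inserting $\pm f(h_{\rm NN}(s),x(s))$ yields
\[
\|h(t)-h_{\rm NN}(t)\| \le \int_0^t \|f(h(s),x(s))-f(h_{\rm NN}(s),x(s))\|\,ds + \int_0^t \|f(h_{\rm NN}(s),x(s))-f_\theta(h_{\rm NN}(s),x(s))\|\,ds.
\]
The second integral is at most $t\epsilon_f\le T\epsilon_f$ by \eqref{eq:func-approx}. For the first, rather than using $L^{f,h}$ globally, I use the piecewise time-dependent constant $L(s)$ which equals $L_{\rm low}$ on $\mathcal{D}_1$ and $L_{\rm high}$ on $\mathcal{D}_2$. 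This is legitimate provided $(h_{\rm NN}(s),x(s))\in B^h\times B^x$ on the interval containing $s$, which follows from $\|h(s)-h_{\rm NN}(s)\|\le 0.1$ (and the trivial $x(s)\in B^x$). The generalized Grönwall inequality then gives $\|h(t)-h_{\rm NN}(t)\|\le T\epsilon_f\exp\!\bigl(\int_0^t L(s)\,ds\bigr)\le T\epsilon_f\, e^{L^{(\rm avg)} T}$, since $\int_0^T L(s)\,ds = L_{\rm low}|\mathcal{D}_1|+L_{\rm high}|\mathcal{D}_2|=TL^{(\rm avg)}$ by \eqref{eq:def-L-avg}.

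The main obstacle is justifying the use of local Lipschitz constants uniformly in time; this is where the smallness hypothesis on $\epsilon_f$ is essential. I close it by a standard continuation/bootstrap: set $t^*\coloneqq\sup\{t\in[0,T]:\sup_{s\in[0,t]}\|h(s)-h_{\rm NN}(s)\|\le 0.1\}$, which is positive by continuity of both trajectories at $0$. On $[0,t^*]$ the local estimates apply, so the Grönwall computation above yields $\sup_{s\in[0,t^*]}\|h(s)-h_{\rm NN}(s)\|\le Te^{L^{(\rm avg)}T}\epsilon_f<0.1$ by assumption. A standard open-closed argument then forces $t^*=T$, extending the bound to all of $[0,T]$ and ensuring $h_{\rm NN}(t)\in[-1.1,1.1]^{d_h}$ throughout. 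Substituting into the Step 1 decomposition delivers \eqref{eq:cts-err-analysis}.
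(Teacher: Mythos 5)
Your proposal is correct and follows essentially the same route as the paper's proof: the same decomposition into an $\epsilon_g$ output term plus $L_g\|h(t)-h_{\rm NN}(t)\|$, the same Gr\"onwall estimate with the piecewise local Lipschitz constant $L(s)$ integrating to $L^{(\rm avg)}T$, and the same bootstrap (the paper phrases it as a contradiction at $t_0=\inf\{t: u(t)\ge 0.1\}$, you as a continuation at $t^*$, but these are the same argument) using $Te^{L^{(\rm avg)}T}\epsilon_f<0.1$ to keep $h_{\rm NN}$ in $[-1.1,1.1]^{d_h}$. No gaps.
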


\begin{remark}[Interpretation of $L^{(\rm avg)}$ and local Lipschitz constants]

\eqref{eq:cts-err-analysis} can provide an improved bound because when the data have sharp changes, $L_{\rm high}$ (as the $\infty$-norm of the Lipschitz constant over time) can be large while $L^{\rm (avg)} = ( L_{\rm low} |D_1| + L_{\rm high} |D_2|)/T$ (as certain $L^1$-norm of the Lipsthictz constant over time) may stay at a smaller value. The partition $\mathcal{D}_1\cup\mathcal{D}_2$ reflects how adaptively choosing grids may help improve the theoretical results, and this will be further explored in the next subsection. Therein $x(t)$ will only be observed at a discrete time grid, which can be adaptively chosen according to the local Lipschitz constants (see Theorem \ref{thm:discret-error} for more details).

\end{remark}

\begin{remark}[Arbitrary desired accuracy in \eqref{eq:cts-err-analysis}]
For any $\varepsilon > 0$, we can choose 
\[
\epsilon_f < \frac{1}{T\exp(L^{(\rm avg)}T) } \min\{0.1, \frac{\varepsilon}{2L_g}\},
\quad \epsilon_g < \frac{\varepsilon}{2},
\]
then the right-hand side of \eqref{eq:cts-err-analysis} is bounded by $\varepsilon$.    
\end{remark}

\paragraph{Approximation under time discretization.}

We assume that $x(t)$ is only observed at discrete time grids $\{ t_i\}_{i=1}^N$ instead of on the whole interval $[0,T]$. Below, the time grids can be chosen adaptively, which will be detailed in Remark \ref{rmk:discrete-err}. Given the time grids $\{ t_i\}_{i=1}^N$, we define the following constants that will be used in the theorems later:

\begin{itemize}
\item By (A2), for each $i$, let $L^{f,h}_i, L^{f,x}_i$ and $M^f_i$ be defined as in \eqref{eq:def-local-Lipschitz} and \eqref{eq:def-local-max-f} respectively, where we take the interval $[s,t]$ as $[t_{i-1}, t_i]$.
\item By (A1), for each $i$, let $L^x_i$ be the Lipschitz constant of $x(t)$ on $t\in [t_{i-1},t_i]$. $i=1,\dots,N+1$ (we follow the convention that $t_0 = 0, t_{N+1}=T$). 
\end{itemize}
For $\Delta t_i\coloneqq t_i - t_{i-1}$ and $\hat{h}_{\text{NN}}(0)=h_0$, the forward Euler scheme is applied on \eqref{eq:dyn-sys-approx} as follows: 
\begin{equation}\label{eq:scheme}
\hat{h}_{\text{NN}}(t_i) = \hat{h}_{\text{NN}}(t_{i-1}) + \Delta t_i f_\theta(\hat{h}_{\text{NN}}(t_{i-1}),x(t_{i-1})), \quad \hat{y}_{\text{NN}}(t_i) = g_\phi(\hat{h}_{\text{NN}}(t_i)),\quad i=1,\dots,N.    
\end{equation}

Compared to Theorem \ref{thm:continuous-error}, Theorem \ref{thm:discret-error} below additionally accounts for the discretization error from the numerical integration, providing an upper bound of the approximation error using $x(t)$ observed at discrete time grids. Theorem \ref{thm:discret-error} focuses on the forward Euler method, and we refer to Remark \ref{rmk:extension-high-oder} for its extension to higher-order schemes.

\begin{theorem}\label{thm:discret-error} 
Under Assumption \ref{assump1} and given a time grid $\{ t_i\}_{i=1}^N$ on $[0,T]$ at which $x(t)$ is observed.
Suppose $\epsilon_f, \epsilon_g>0$ and $\epsilon_f, \Delta t_j$ satisfy
\begin{equation}\label{eq:condition-thm}
T\exp(\sum_{i=1}^N L^{f,h}_i\Delta t_i)  \left(\epsilon_f + \max_{j}\{\mu_j\Delta t_j\}\right) < 0.1, 
\end{equation}
where
\[\mu_j\coloneqq L^{f,h}_jM_j^f + L^{f,x}_jL^x_j,\]
and let $f_\theta, g_\phi$ be the neural networks satisfying \eqref{eq:func-approx} (the model complexity is bounded as in Lemma \ref{lemma:func-approx}), 
then
\begin{equation}\label{eq:discrete-err-analysis}
\max_i\|y(t_i)-\hat{y}_{\text{NN}}(t_i)\|\leq \epsilon_g + L_gT\exp(\sum_{i=1}^N L^{f,h}_i\Delta t_i)  \left(\epsilon_f + \max_{j}\{\mu_{j}\Delta t_j\}\right).
\end{equation}
\end{theorem}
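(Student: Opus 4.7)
The plan is to decompose the error at each grid point into a readout-map contribution and a hidden-state contribution, then propagate the hidden-state error through the forward-Euler recursion via a discrete Grönwall inequality, mirroring the structure of Theorem \ref{thm:continuous-error} but additionally accounting for the Euler truncation error. The main tools will be the global Lipschitz continuity of $g$, the local Lipschitz constants $L_i^{f,h}, L_i^{f,x}$ of $f$ on the balls $B^h\times B^x$ from \eqref{eq:def-local-Lipschitz}, the neural network approximation bounds \eqref{eq:func-approx} from Lemma \ref{lemma:func-approx}, and an inductive use of condition \eqref{eq:condition-thm} to keep the discrete iterates in the cube $[-1.1,1.1]^{d_h}$ where these bounds apply.

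First I would reduce to the hidden state: provided $\hat h_{\text{NN}}(t_i)\in [-1.1,1.1]^{d_h}$ (justified by induction, see below), a triangle inequality together with $L_g$-Lipschitzness of $g$ and $\|g-g_\phi\|_\infty<\epsilon_g$ gives $\|y(t_i)-\hat y_{\text{NN}}(t_i)\|\leq L_g\|h(t_i)-\hat h_{\text{NN}}(t_i)\|+\epsilon_g$. The rest of the proof then consists of bounding $e_i := h(t_i) - \hat h_{\text{NN}}(t_i)$, with $e_0=0$. Integrating the continuous-time ODE for $h$ on $[t_{i-1},t_i]$ and subtracting \eqref{eq:scheme}, I would decompose the integrand as a telescoping sum of three pieces: (a) $f(h(s),x(s))-f(h(t_{i-1}),x(t_{i-1}))$, bounded pointwise by $\mu_i(s-t_{i-1})$ using the local Lipschitz constants of $f$ together with the regularity of $h$ (whose velocity is bounded by $M^f_i$) and of $x$, integrating to $\mu_i\Delta t_i^2/2$; (b) $f(h(t_{i-1}),x(t_{i-1}))-f(\hat h_{\text{NN}}(t_{i-1}),x(t_{i-1}))$, bounded by $L_i^{f,h}\|e_{i-1}\|$; and (c) $f - f_\theta$ evaluated at $(\hat h_{\text{NN}}(t_{i-1}),x(t_{i-1}))$, bounded by $\epsilon_f$ from \eqref{eq:func-approx}. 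This yields the one-step recursion $\|e_i\|\leq (1+L_i^{f,h}\Delta t_i)\|e_{i-1}\| + \Delta t_i(\epsilon_f+\mu_i\Delta t_i/2)$.

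Iterating the recursion with $e_0=0$ and using $1+x\leq e^x$ will give
\[
\|e_i\|\leq \exp\Big(\sum_{j=1}^i L_j^{f,h}\Delta t_j\Big)\sum_{k=1}^i\Delta t_k\Big(\epsilon_f+\frac{\mu_k\Delta t_k}{2}\Big)\leq T\exp\Big(\sum_{j=1}^N L_j^{f,h}\Delta t_j\Big)\big(\epsilon_f+\max_j\mu_j\Delta t_j\big),
\]
where the second inequality uses $\sum_k\Delta t_k\leq T$ and $\mu_k\Delta t_k/2\leq \max_j\mu_j\Delta t_j$. Combining with the readout reduction from the previous paragraph immediately yields \eqref{eq:discrete-err-analysis}.

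\textbf{Main obstacle.} The delicate step is the inductive argument that the discrete trajectory stays inside the domain where the approximation bounds \eqref{eq:func-approx} and the local Lipschitz constants $L_i^{f,h}, L_i^{f,x}$ are simultaneously valid: \eqref{eq:func-approx} holds only on $[-1.1,1.1]^{d_h}\times[-1,1]^D$, and $L_i^{f,h}$ is the Lipschitz constant on $B^h\supset h([t_{i-1},t_i])$ rather than globally. Condition \eqref{eq:condition-thm} is precisely the statement that the Grönwall bound at $i=N$ is at most $0.1$, so assuming it inductively at step $i-1$ forces $\hat h_{\text{NN}}(t_{i-1})$ within distance $0.1$ of $h(t_{i-1})\in[-1,1]^{d_h}$, keeping it in $B^h\subset[-1.1,1.1]^{d_h}$ and closing the induction; the same bound then legitimates the readout step at index $i$.
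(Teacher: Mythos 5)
Your proposal is correct and follows essentially the same route as the paper's proof: the same induction keeping $\hat h_{\text{NN}}(t_i)$ in $[-1.1,1.1]^{d_h}$ via condition \eqref{eq:condition-thm}, the same three-way decomposition of the integrand (Euler truncation, hidden-state Lipschitz propagation, network approximation error), the same discrete Gr\"onwall iteration with $1+x\leq e^x$, and the same readout step through $L_g$ and $\epsilon_g$. The only difference is cosmetic: you integrate the truncation term to get $\mu_i\Delta t_i^2/2$ where the paper uses the cruder uniform bound $\mu_i\Delta t_i^2$, which still yields the stated inequality.
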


The condition \eqref{eq:condition-thm} in Theorem \ref{thm:discret-error} is imposed to guarantee that the numerically integrated hidden states $\{\hat{h}_{\text{NN}}(t_i)\}$ belong to $[-1.1,1.1]^{d_h}$, so that the approximation results in Lemma \ref{lemma:func-approx} are applicable.

\begin{remark}[Arbitrary desired accuracy in \eqref{eq:discrete-err-analysis}]\label{rmk:discrete-err}
For any $\varepsilon>0$, suppose the time grids satisfy that
\begin{equation}\label{eq:rmk-cond-grids}
\max_j\{\mu_j\Delta t_j\}\}
<  \frac{1}{ T \exp(\sum_{i=1}^N L^{f,h}_i\Delta t_i)}
\min\{0.05, \;\frac{\varepsilon}
{3L_g }\},    
\end{equation}
then we can choose 
\[
\epsilon_f < \frac{1}{ T \exp(\sum_{i=1}^N L^{f,h}_i\Delta t_i)}
\min\{0.05, \;\frac{\varepsilon}
{3L_g }\},\quad 
\epsilon_g < \frac{\varepsilon}{3},
\]
to make the right-hand side of \eqref{eq:discrete-err-analysis} bounded by $\varepsilon$. 
\end{remark}

\begin{remark}[Extension to higher-order integration schemes]\label{rmk:extension-high-oder}

% Note that to fit the value of $y(t)$ at the current time $t_i$, the scheme \eqref{eq:scheme} only utilizes the historical part of the observable data $x(t)$. 
%
% Moreover, 
The numerical integration scheme \eqref{eq:scheme} can be extended to the multi-step explicit methods of higher orders (e.g. Runge-Kutta methods), given that the time grid selection appropriately fulfills the requirements of the integration scheme. For example, we may choose $t_{i+1}-t_i = t_i-t_{i-1}$ for adjacent sub-intervals $[t_{i-1},t_i], [t_i,t_{i+1}]$ to apply the commonly used RK4 method .

\end{remark}

Theorem \ref{thm:discret-error} provides insights into the utility of the adaptive steps for improving the model fitting performance, which is reflected in the last term, involving $\max_{j}\{\mu_j\Delta t_j\}$, in Eq. \eqref{eq:discrete-err-analysis}. Specifically, time grids may be selected such that $\Delta t_i$ is small if $L^x_i$ is great, indicating a steep change in $x(t)$ for $t\in[t_{i-1},t_i]$. 
On the contrary, when the variation in $x(t)$ is smaller, we employ larger $\Delta t_i$ to reduce the total number of required time grids.

\section{Numerical Experiments}
\label{sec:numerical}

We validate the performance of the proposed method using three types of datasets: the simulated spiral series, the simulated event data, and a real ECG dataset. We present the complexity vs. accuracy tradeoff curves for different methods, and demonstrate the advantage of the \textit{\modelname} method. 
The code can be found at 
\href{https://github.com/Yixuan-Tan/RNN_ODE_Adap}{\texttt{https://github.com/Yixuan-Tan/RNN\_ODE\_Adap}}.

\subsection{Trained Models}

In this section, we examine and report the performance of two models,  \textit{RNN-ODE} and \textit{\modelname}.
Both models are trained to minimize  the loss function as in \eqref{eq:training_loss1}, and the difference lies in the choice of the time grid. 
Specifically, \textit{RNN-ODE} is trained using regular (non-adaptive) time steps, and \textit{\modelname} is trained with adaptively selected time steps by Algorithm \ref{alg_adaptive}. 
The architecture of both models is the same as vanilla RNN; see more details in Appendix \ref{sec:net-struct}.

As explained in Section \ref{subsec:comp-cost}, the computational cost of training the models is proportional to the average length of the training windows. Hence, in this section, when we compare the performance under varying complexities, the ``complexities'' are discussed in terms of averaged ``numbers of grids'' of the training data.
More details of the experiment settings are in Appendix \ref{sec:append-exp}, with the boxplots of the error plots and additional results provided in Appendix \ref{sec:add-exp-results}.

\begin{figure}[t]
\centering

\includegraphics[width=0.5\linewidth]{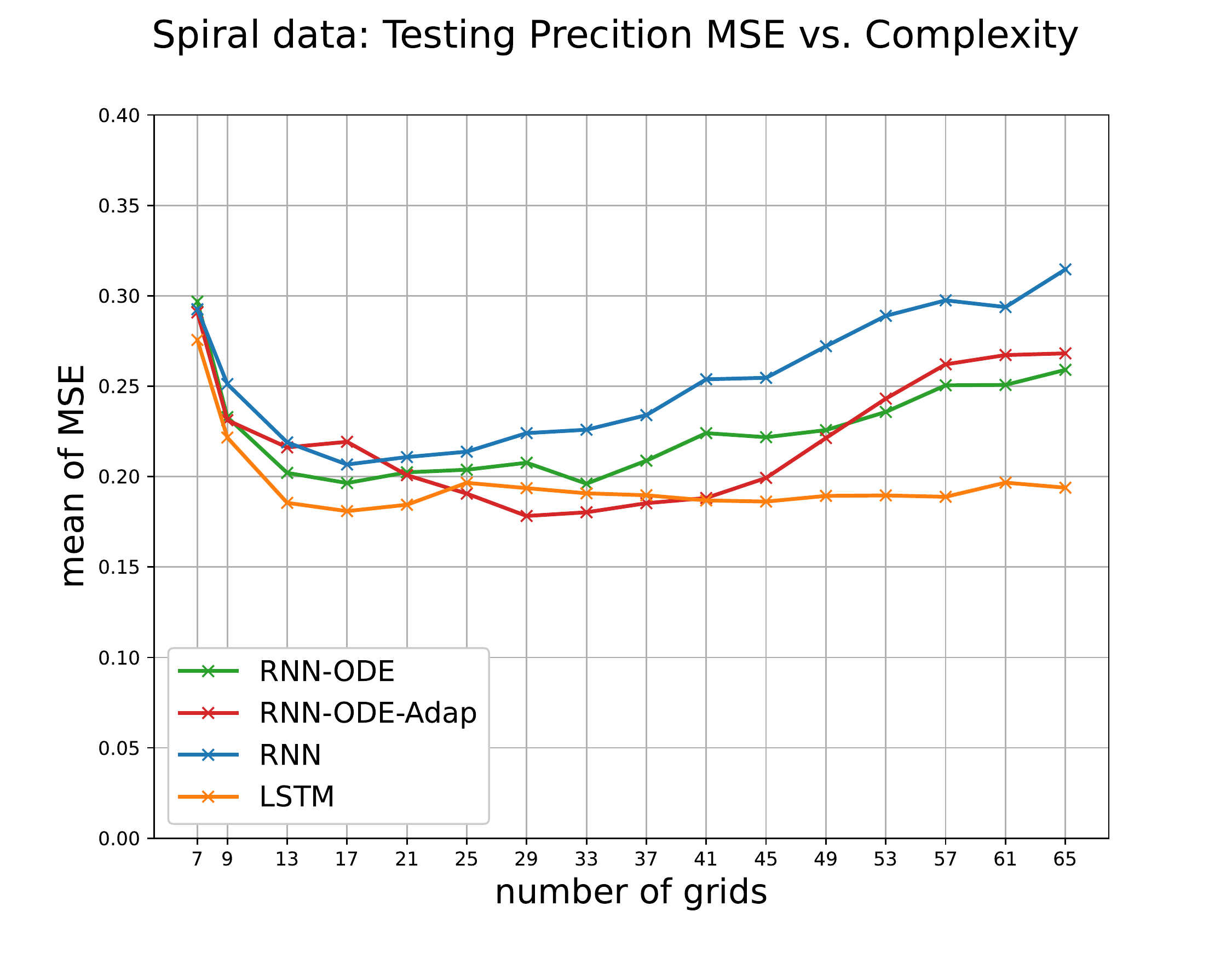}

\caption{
Comparison of the MSE prediction errors on the simulated spiral data generated from Eq. \eqref{eq:spiral-ODE} for RNN, LSTM, RNN-ODE, RNN-ODE-Adap. The $x$-axis represents the average length of the training windows, which reflects the complexity of the models (see Section \ref{subsec:comp-cost}). 
}
\label{fig:spiral-error}
\end{figure}

\subsection{Simulated Spiral Data}\label{sec:spiral}

We first investigate the capability of our method % (denoted as RNN-ODE) 
to fit and capture the underlying dynamics of the simulated spiral data. For a given matrix $A\in\mathbb{R}^{2\times 2}$, one spiral is generated by integrating the ODE 
\begin{equation}
x'(t)= f(x(t)) = \|x(t)\|^{-2} A x(t), \label{eq:spiral-ODE}    
\end{equation}
over the time span $[0,T]$, with the initial value $x(0) = x_0\in\mathbb{R}^2$.
The initial training and testing windows are of length $N=64$, corresponding to the largest complexity shown in Figure \ref{fig:spiral-error}. 

We first compare the on-sample prediction performance with RNN and LSTM \cite{hochreiter1997long} under different computational complexities. 
For each testing window, we use the first half as available historical data and perform predictions for the second half. Figure \ref{fig:spiral-error} shows the averaged Mean Squared Errors (MSEs) computed as in Eq. \eqref{eq:pred-error} for the models, with varying complexities.
From the trade-off curves  in Figure \ref{fig:spiral-error}, we observe that when the training cost is relatively low, namely the training windows are short, all the models underfit and do not learn the dynamics well, resulting in poor prediction performance. When the training cost increases and the training windows consist of more time steps, the models overfit, and prediction accuracy worsens.

\begin{figure}[t]
\centering
\begin{tabular}{cc}
\includegraphics[width=0.98\linewidth]{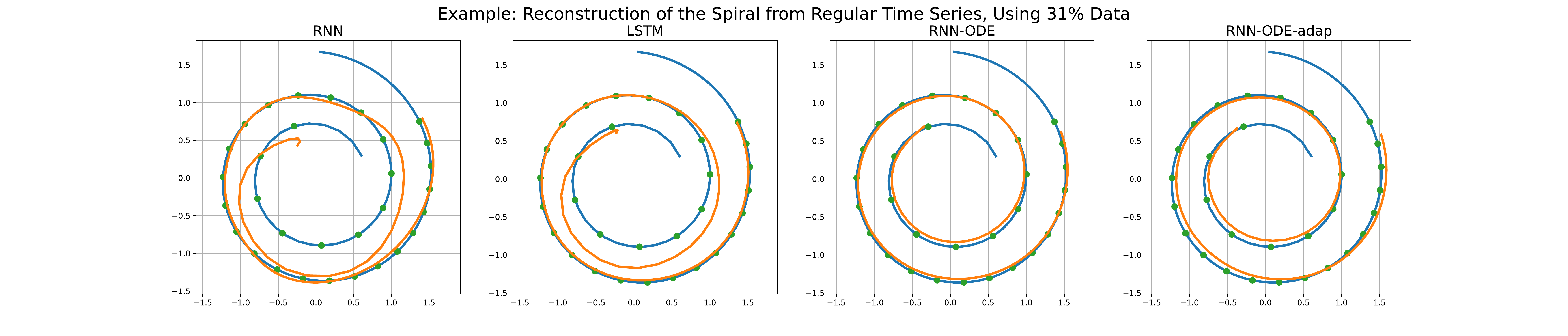}\\
\includegraphics[width=0.98\linewidth]{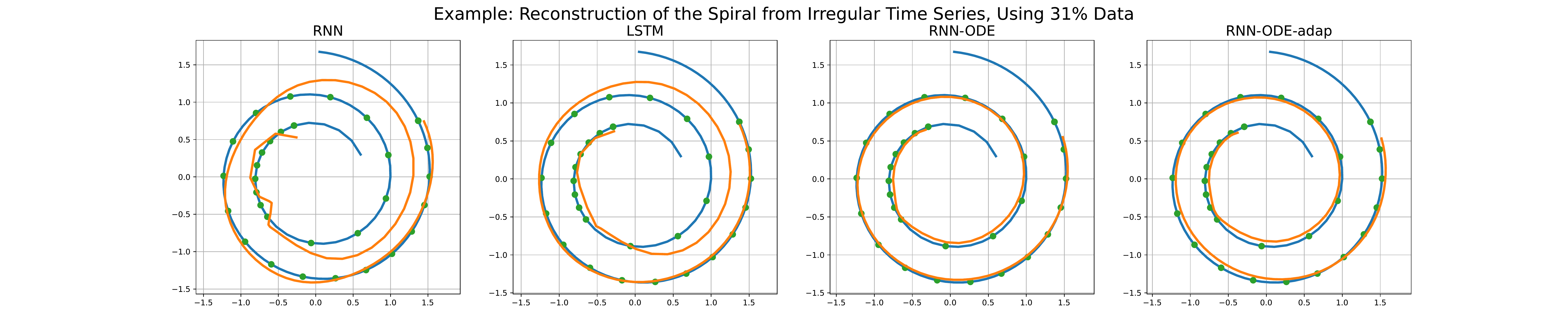}
\end{tabular}
\caption{
Comparison of RNN, LSTM, RNN-ODE, RNN-ODE-Adap on the reconstruction of simulated spiral data generated from Eq. \eqref{eq:spiral-ODE}, using regular (upper) v.s. irregular (lower) time series.}
\label{fig:spiral-reconstruct}
\end{figure}

Moreover, we observe from Figure \ref{fig:spiral-error} that for the same complexity, RNN-ODE significantly improves the forecasting performance compared to the vanilla RNN, especially when the number of grids is not too small so that the models begin to learn the dynamics well. Additionally, {\modelname} further achieves smaller prediction errors than RNN-ODE since it selects data points more informatively with the same number of grids. Finally, we note that while LSTM performs best in most cases, it possesses a more complex network structure. We refer to Appendix \ref{sec:add-exp-results} for additional results about the LSTM and Lipschitz-RNN \cite{erichson2020lipschitz} variants of the adaptive model.

Figure \ref{fig:spiral-reconstruct} shows examples of spiral reconstructions using about 30\% of the data. The time steps might be obtained by interpolation and regular (the upper half of Figure \ref{fig:spiral-reconstruct}), or be chosen adaptively by Algorithm \ref{alg_adaptive} and thus irregular (the lower half of Figure \ref{fig:spiral-reconstruct}). 
We can see that there are mismatches between shapes reconstructed by RNN and LSTM and the ground truth spiral shape. In contrast, we note that RNN-ODE and {\modelname} are consistent with the underlying spirals.

\subsection{Simulated Point-Process Data} \label{sec:hawkes}

We further apply our method to a simulated example of event times data generated from temporal Hawkes processes \cite{rasmussen2011temporal} as described in Section \ref{sec:theory}. 
We train the model \eqref{eq:ode}-\eqref{eq:output} and estimate the true intensity function $\lambda(t)$ using the output $x(t)$. The mean squared loss $L = \int_0^T (dN(t)/d t - \lambda(t) )^2 dt$ is used when fitting the neural ODE model.

The fitting errors of the four models versus the complexity are shown below in Figure \ref{fig:Hawkes-error} (left). It can be observed that for all the models, the fitting errors decrease as the training complexity increases. Furthermore, {\modelname} achieves the best fitting performance. The right panel shows the log-log plot of RNN-ODE and {\modelname}, from which we can see more clearly that for fixed model complexity (network structure), the proposed model approaches the true intensity function.

Figure \ref{fig:Hawkes-error} (right) shows two examples of fitting performance. In this example, all models use 33 grids on average. Thus, the complexity is 50\% of the largest one. It can be observed that RNN fails to capture the smooth decay of the kernel. Furthermore, we can see that 
with the adaptive choice of time steps which is more refined, {\modelname} can learn the dynamics of the intensity function much better -- it can estimate the ``jump'' in the intensity accurately.

\subsection{Real Data: ECG Time Series}\label{sec:ecg}

We validate the proposed {\modelname} on one public electrocardiography (ECG) dataset PTB-XL \cite{wagner2020ptb,goldberger2000physiobank}. 
We focus on learning the underlying dynamics of ECG signals using the RNN-ODE model and use adaptive time steps for ``spikes'' in data series.   
We remark that windows of the highest sampling rate are chosen to have $N=96$ time grids, in which usually two cycles are contained. In this way, the prediction of the second half given the first half would be more meaningful.

Figure \ref{fig:ECG-error} (left) shows the on-sample prediction MSEs of the four methods for two different prediction lengths, 24 and 48, which are $1/4$ and $1/2$ of the whole window. Here, the
prediction is performed with the original finest grids by integrating the ODE function. It can be seen that RNN-ODE has smaller prediction errors than RNN on average, and adding addictive steps helps achieve slightly better performance. Furthermore, LSTM still achieves the smallest error most of the time. The reason for this is explained in Section \ref{sec:spiral} and may be due to its more complex network structure.

\begin{figure}[h]
    \centering
\begin{tabular}{cc}    
\hspace{-1em} \includegraphics[width=0.52\textwidth]{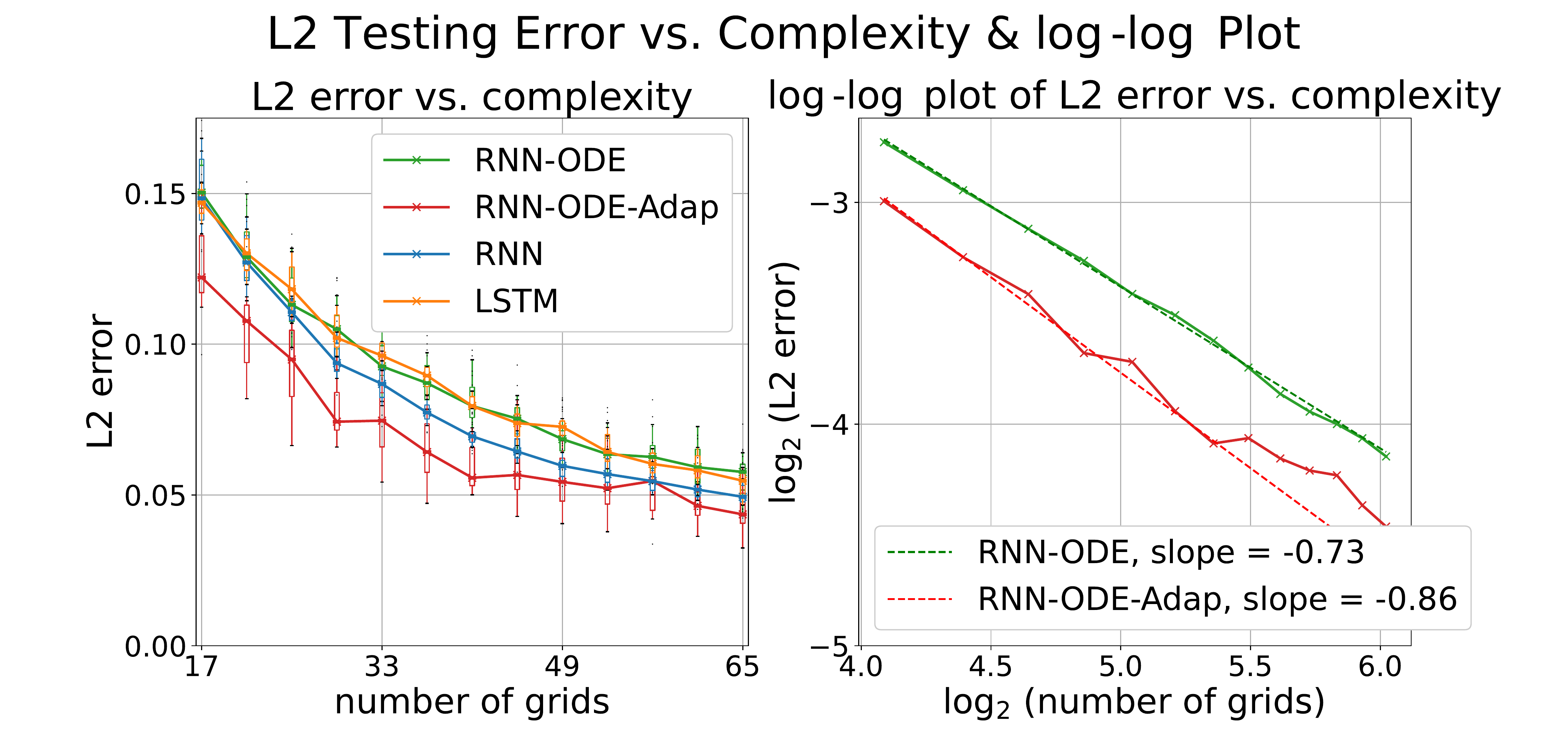}  & \hspace{-4em} \includegraphics[width=0.58\linewidth]{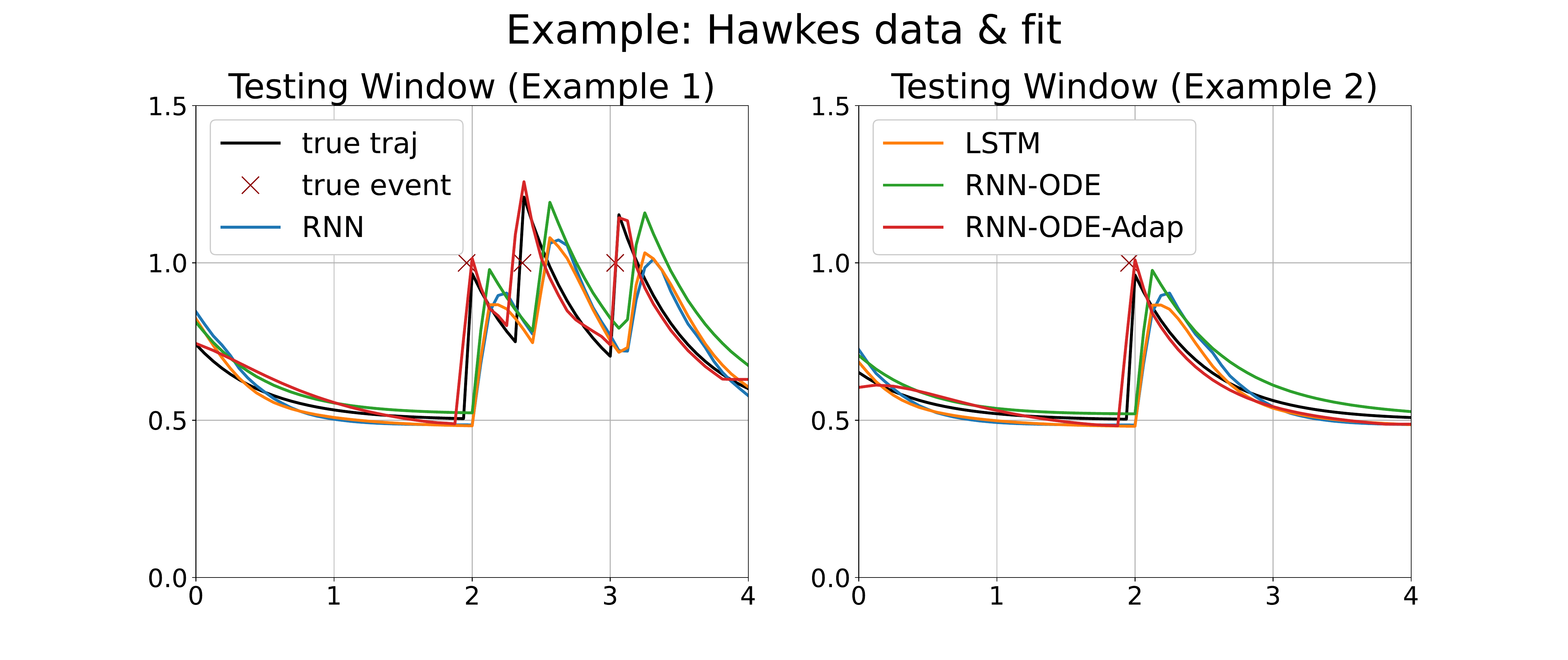} \\
    \end{tabular}
    \caption{Left: Comparison of the fitting errors of the underlying intensity function of the simulated event-type data generated from the Hawkes process for RNN, LSTM, RNN-ODE, RNN-ODE-Adap. $x$-axis represents computational complexity, $y$-axis is the fitting error computed as in Eq. \eqref{eq:fit-error}. 
    Right: Examples of fitted intensity function of the simulated event times data generated from the Hawkes process using RNN, LSTM, RNN-ODE, RNN-ODE-Adap.}
    \label{fig:Hawkes-error}
\end{figure}

Figure \ref{fig:ECG-error} (right) and Figure \ref{fig:ECG-example2} in Appendix \ref{sec:add-exp-results} present examples of prediction on the testing windows for prediction lengths 48 and 24, respectively.  These examples demonstrate that {\modelname} models capture the cycles and trends of the ECG more effectively than RNN.
The good performance implies that the proposed algorithm could be used to fit and predict the ECG-type signal well. This also implies potential future applications of {\modelname} in real healthcare problems.

\begin{figure}[t]
    \centering
    \begin{tabular}{cc}
       \hspace{-1.8em}\includegraphics[width=0.58\textwidth]{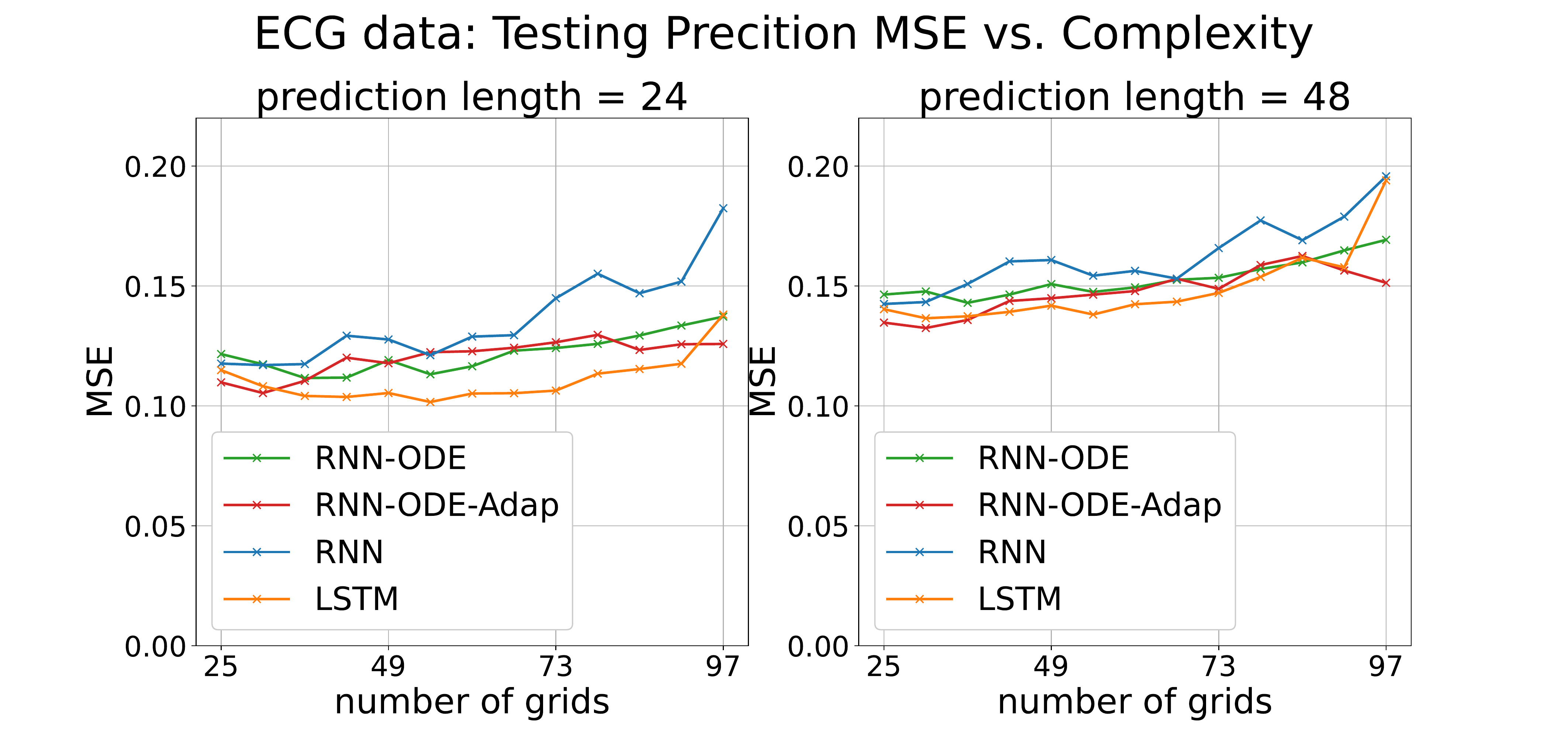}  & \hspace{-5em} \includegraphics[width=0.58\linewidth]{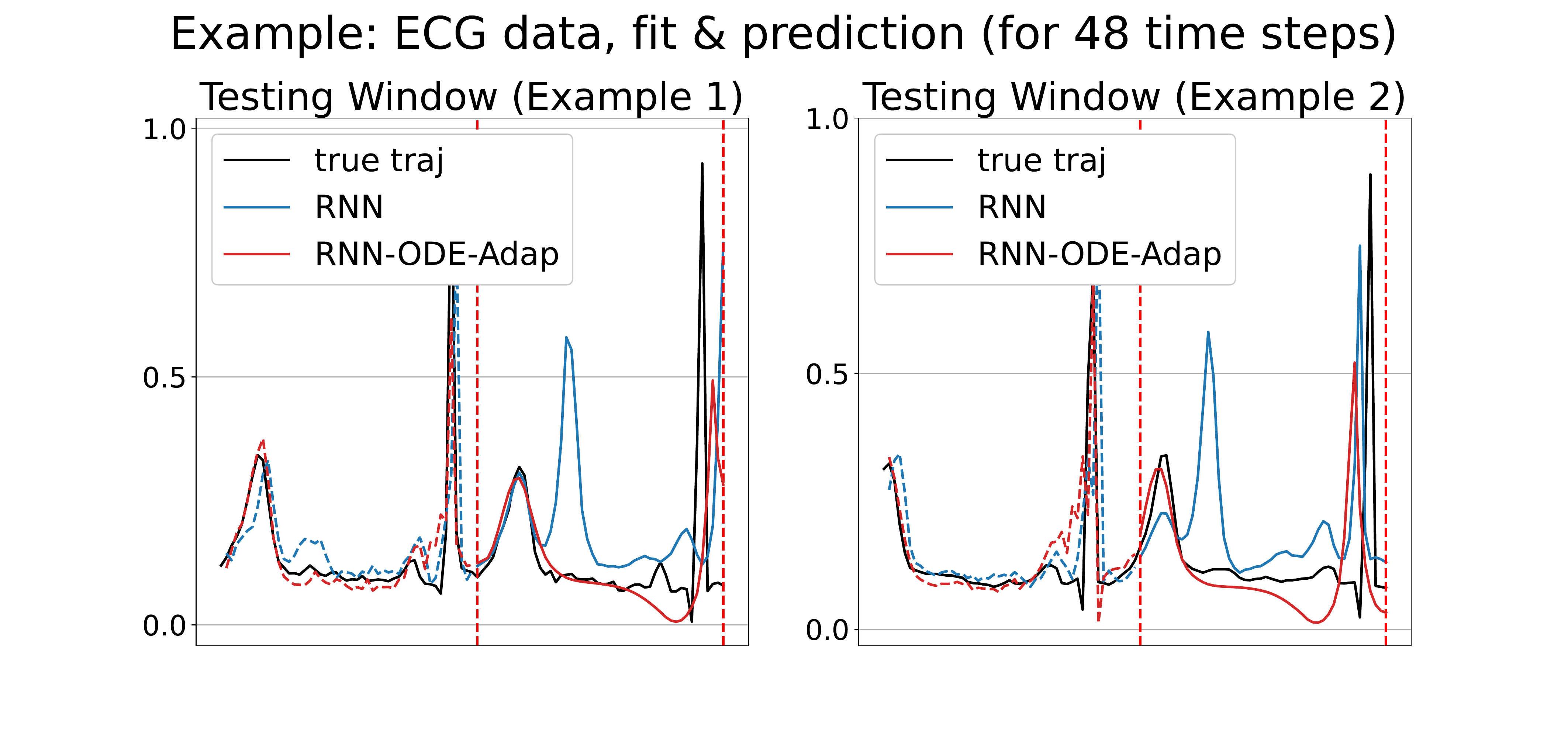} \\
    \end{tabular}
    \caption{Left: Comparison of the prediction errors on the real ECG data under two different prediction lengths (24 and 48) for RNN, LSTM, RNN-ODE, RNN-ODE-Adap. $x$-axis has been explained in the caption of Figure \ref{fig:spiral-error}. Right: Examples of 48 steps ahead prediction for the testing ECG data using RNN (marked in blue) and {\modelname} (marked in red). The predicted region is marked between dashed lines.}
    \label{fig:ECG-error}
\end{figure}

\section{Discussion}

In this paper we propose a general framework for constructing adaptive time steps when using the neural ODE combining the observed data to model times series. 
We demonstrate that it tends to be more efficient for modeling ``spike-like'' time series. The proposed algorithm of adaptive time steps is widely applicable to other types of models, not limited to neural ODE and RNN models. Moreover, the selection of adaptive time steps can be generalized to a broad class of non-stationary time series with different kinds of non-stationarities. 

This highlights the potential for further research in this filed, which may be approached from several angles. Firstly, there is a need for more theoretical analysis of the proposed {\modelname} framework under a broad spectrum of non-stationary time series, ranging from continuous to discontinuous data sequences. Additionally, there is a need to investigate the more flexible adaptive scheme, which could freely add middle steps adaptively and merge time grids without being restricted to dyadic partitioning.

% \xc{aknowledge}

\section*{Acknowledgement}

Y.T. and X.C. are partially supported by Simons Foundation (ID 814643) and NSF (DMS-2007040).

\bibliographystyle{plain}
\bibliography{refsArxiv}

\begin{thebibliography}{10}

\bibitem{chang2019antisymmetricrnn}
Bo~Chang, Minmin Chen, Eldad Haber, and Ed~H Chi.
\newblock {AntisymmetricRNN}: A dynamical system view on recurrent neural
  networks.
\newblock {\em arXiv preprint arXiv:1902.09689}, 2019.

\bibitem{chen2018neural}
Ricky~TQ Chen, Yulia Rubanova, Jesse Bettencourt, and David~K Duvenaud.
\newblock Neural ordinary differential equations.
\newblock {\em Advances in neural information processing systems}, 31, 2018.

\bibitem{cheng2015time}
Changqing Cheng, Akkarapol Sa-Ngasoongsong, Omer Beyca, Trung Le, Hui Yang,
  Zhenyu Kong, and Satish~TS Bukkapatnam.
\newblock Time series forecasting for nonlinear and non-stationary processes: a
  review and comparative study.
\newblock {\em Iie Transactions}, 47(10):1053--1071, 2015.

\bibitem{chow2000modeling}
Tommy~WS Chow and Xiao-Dong Li.
\newblock Modeling of continuous time dynamical systems with input by recurrent
  neural networks.
\newblock {\em IEEE Transactions on Circuits and Systems I: Fundamental Theory
  and Applications}, 47(4):575--578, 2000.

\bibitem{dong2020towards}
Chengyu Dong, Liyuan Liu, Zichao Li, and Jingbo Shang.
\newblock Towards adaptive residual network training: A {Neural-ODE}
  perspective.
\newblock In {\em International conference on machine learning}, pages
  2616--2626. PMLR, 2020.

\bibitem{erichson2020lipschitz}
N~Benjamin Erichson, Omri Azencot, Alejandro Queiruga, Liam Hodgkinson, and
  Michael~W Mahoney.
\newblock Lipschitz recurrent neural networks.
\newblock {\em arXiv preprint arXiv:2006.12070}, 2020.

\bibitem{fitzhugh1955}
Richard FitzHugh.
\newblock Mathematical models of threshold phenomena in the nerve membrane.
\newblock {\em The bulletin of mathematical biophysics}, 17:257--278, 1955.

\bibitem{funahashi1993approximation}
Ken-ichi Funahashi and Yuichi Nakamura.
\newblock Approximation of dynamical systems by continuous time recurrent
  neural networks.
\newblock {\em Neural networks}, 6(6):801--806, 1993.

\bibitem{goldberger2000physiobank}
Ary~L Goldberger, Luis~AN Amaral, Leon Glass, Jeffrey~M Hausdorff, Plamen~Ch
  Ivanov, Roger~G Mark, Joseph~E Mietus, George~B Moody, Chung-Kang Peng, and
  H~Eugene Stanley.
\newblock {PhysioBank, PhysioToolkit, and PhysioNet}: components of a new
  research resource for complex physiologic signals.
\newblock {\em circulation}, 101(23):e215--e220, 2000.

\bibitem{gonzalez2012time}
Gloria Gonzalez-Rivera and Javier Arroyo.
\newblock Time series modeling of histogram-valued data: The daily histogram
  time series of {S\&P500} intradaily returns.
\newblock {\em International Journal of Forecasting}, 28(1):20--33, 2012.

\bibitem{greydanus2021piecewise}
Sam Greydanus, Stefan Lee, and Alan Fern.
\newblock Piecewise-constant {Neural ODEs}.
\newblock {\em arXiv preprint arXiv:2106.06621}, 2021.

\bibitem{habiba2020neural}
Mansura Habiba and Barak~A Pearlmutter.
\newblock Neural ordinary differential equation based recurrent neural network
  model.
\newblock In {\em 2020 31st Irish Signals and Systems Conference (ISSC)}, pages
  1--6. IEEE, 2020.

\bibitem{heinrich2020learning}
Stefan Heinrich, Tayfun Alpay, and Yukie Nagai.
\newblock Learning timescales in gated and adaptive continuous time recurrent
  neural networks.
\newblock In {\em 2020 IEEE International Conference on Systems, Man, and
  Cybernetics (SMC)}, pages 2662--2667. IEEE, 2020.

\bibitem{hewamalage2021recurrent}
Hansika Hewamalage, Christoph Bergmeir, and Kasun Bandara.
\newblock Recurrent neural networks for time series forecasting: Current status
  and future directions.
\newblock {\em International Journal of Forecasting}, 37(1):388--427, 2021.

\bibitem{hochreiter1997long}
Sepp Hochreiter and J{\"u}rgen Schmidhuber.
\newblock Long short-term memory.
\newblock {\em Neural computation}, 9(8):1735--1780, 1997.

\bibitem{kag2021time}
Anil Kag and Venkatesh Saligrama.
\newblock Time adaptive recurrent neural network.
\newblock In {\em Proceedings of the IEEE/CVF Conference on Computer Vision and
  Pattern Recognition}, pages 15149--15158, 2021.

\bibitem{kag2020rnns}
Anil Kag, Ziming Zhang, and Venkatesh Saligrama.
\newblock {RNNs} incrementally evolving on an equilibrium manifold: A panacea
  for vanishing and exploding gradients?
\newblock In {\em International Conference on Learning Representations}, 2020.

\bibitem{kidger2020neural}
Patrick Kidger, James Morrill, James Foster, and Terry Lyons.
\newblock Neural controlled differential equations for irregular time series.
\newblock {\em Advances in Neural Information Processing Systems},
  33:6696--6707, 2020.

\bibitem{kidger2020neural_app}
Patrick Kidger, James Morrill, James Foster, and Terry Lyons.
\newblock Neural controlled differential equations for irregular time series.
\newblock {\em Advances in Neural Information Processing Systems},
  33:6696--6707, 2020.

\bibitem{li2005approximation}
Xiao-Dong Li, John~KL Ho, and Tommy~WS Chow.
\newblock Approximation of dynamical time-variant systems by continuous-time
  recurrent neural networks.
\newblock {\em IEEE Transactions on Circuits and Systems II: Express Briefs},
  52(10):656--660, 2005.

\bibitem{li2022approximation}
Zhong Li, Jiequn Han, E~Weinan, and Qianxiao Li.
\newblock Approximation and optimization theory for linear continuous-time
  recurrent neural networks.
\newblock {\em J. Mach. Learn. Res.}, 23:42--1, 2022.

\bibitem{lu2018beyond}
Yiping Lu, Aoxiao Zhong, Quanzheng Li, and Bin Dong.
\newblock Beyond finite layer neural networks: Bridging deep architectures and
  numerical differential equations.
\newblock In {\em International Conference on Machine Learning}, pages
  3276--3285. PMLR, 2018.

\bibitem{morrill2021neural}
James Morrill, Cristopher Salvi, Patrick Kidger, and James Foster.
\newblock Neural rough differential equations for long time series.
\newblock In {\em International Conference on Machine Learning}, pages
  7829--7838. PMLR, 2021.

\bibitem{pontryagin1987mathematical}
Lev~Semenovich Pontryagin.
\newblock {\em Mathematical theory of optimal processes}.
\newblock CRC press, 1987.

\bibitem{rasmussen2011temporal}
Jakob~Gulddahl Rasmussen.
\newblock Temporal point processes: The conditional intensity function.
\newblock {\em Lecture Notes, Jan}, 2011.

\bibitem{rosenblatt1961principles}
Frank Rosenblatt.
\newblock {\em Principles of neurodynamics.}
\newblock Spartan Books, 1962.

\bibitem{rubanova2019latent}
Yulia Rubanova, Ricky~TQ Chen, and David~K Duvenaud.
\newblock Latent ordinary differential equations for irregularly-sampled time
  series.
\newblock {\em Advances in neural information processing systems}, 32, 2019.

\bibitem{rumelhart1985learning}
David~E Rumelhart, Geoffrey~E Hinton, and Ronald~J Williams.
\newblock Learning internal representations by error propagation.
\newblock Technical report, California Univ San Diego La Jolla Inst for
  Cognitive Science, 1985.

\bibitem{rumelhart1986learning}
David~E Rumelhart, Geoffrey~E Hinton, and Ronald~J Williams.
\newblock Learning representations by back-propagating errors.
\newblock {\em nature}, 323(6088):533--536, 1986.

\bibitem{rusch2020coupled}
T~Konstantin Rusch and Siddhartha Mishra.
\newblock Coupled oscillatory recurrent neural network ({coRNN}): An accurate
  and (gradient) stable architecture for learning long time dependencies.
\newblock {\em arXiv preprint arXiv:2010.00951}, 2020.

\bibitem{rusch2021long}
T~Konstantin Rusch, Siddhartha Mishra, N~Benjamin Erichson, and Michael~W
  Mahoney.
\newblock Long expressive memory for sequence modeling.
\newblock {\em arXiv preprint arXiv:2110.04744}, 2021.

\bibitem{rusch2021long_app}
T~Konstantin Rusch, Siddhartha Mishra, N~Benjamin Erichson, and Michael~W
  Mahoney.
\newblock Long expressive memory for sequence modeling.
\newblock {\em arXiv preprint arXiv:2110.04744}, 2021.

\bibitem{sakoe1978dynamic}
Hiroaki Sakoe and Seibi Chiba.
\newblock Dynamic programming algorithm optimization for spoken word
  recognition.
\newblock {\em IEEE transactions on acoustics, speech, and signal processing},
  26(1):43--49, 1978.

\bibitem{wagner2020ptb}
Patrick Wagner, Nils Strodthoff, Ralf-Dieter Bousseljot, Dieter Kreiseler,
  Fatima~I Lunze, Wojciech Samek, and Tobias Schaeffter.
\newblock {PTB-XL}, a large publicly available electrocardiography dataset.
\newblock {\em Scientific data}, 7(1):1--15, 2020.

\bibitem{weinan2017proposal}
E~Weinan.
\newblock A proposal on machine learning via dynamical systems.
\newblock {\em Communications in Mathematics and Statistics}, 1(5):1--11, 2017.

\bibitem{yarotsky2017error}
Dmitry Yarotsky.
\newblock Error bounds for approximations with deep relu networks.
\newblock {\em Neural Networks}, 94:103--114, 2017.

\bibitem{zhang2019anodev2}
Tianjun Zhang, Zhewei Yao, Amir Gholami, Joseph~E Gonzalez, Kurt Keutzer,
  Michael~W Mahoney, and George Biros.
\newblock {ANODEV2}: A coupled {Neural ODE} framework.
\newblock {\em Advances in Neural Information Processing Systems}, 32, 2019.

\end{thebibliography}

\appendix

\setcounter{figure}{0}
\renewcommand{\thefigure}{A\arabic{figure}}

\renewcommand{\theequation}{A\arabic{equation}}
\setcounter{equation}{0}

\section{Proofs} \label{app:proofs}

\subsection{Proofs in Section \ref{subsec:theory-consistency} }\label{app:intensity-proof}

\begin{proof}[Proof of Proposition \ref{prop-error}]
    We consider the case with discretized and finite time grids. We assume there exists $d$ buffer time steps with samples $x_{-d},\ldots,x_{-1}$. The samples used for estimation are $x_{0},\ldots,x_{M}$, and each time step is with duration $\Delta t $. Thus the whole time duration is $T=M\Delta t$. The random process we observe on discrete time horizon $\{m:1\leq m\leq M\}$ is as follows. At time $m$ we observe integer variable $x_{m}\in\{0,1,2,\ldots\}$. Here $x_{m}$ means the number of event happening within $((m-1)\Delta t, m\Delta t]$ and $x_{m}=0$ means no event happening. Note that for the Hawkes process, which is essentially an inhomogeneous Poisson process, the variable $x_{m}$ is just a Poisson random variable with the intensity parameter depending on the historical observations. We denote the average intensity function within the time interval $((m-1)\Delta t, n\Delta t]$ as:
\[
\tilde\lambda(m) = \frac{1}{\Delta t}\int_{(m-1)\Delta t}^{m\Delta t} \lambda^*(t)dt,
\]
where $\lambda^*(t) = \mu + \alpha \int_0^t \phi(t-s)dN(s)$ is the true (continuous-time) intensity function. 

By the properties of the Poisson distribution, we have $\mathbb E[x_m|\mathcal F_{m-1}] = \tilde\lambda(m) \Delta t$ and $\mathrm{Var}[x_m|\mathcal F_{m-1}] = \tilde\lambda(m) \Delta t$. 
%For $\Delta t$ sufficiently small, we have $\mathbb P(\omega_n>1|\mathcal F_{n-1}) = 1-e^{-\tilde\lambda(n) \Delta t}-$
Our goal is to recover the intensity function $\lambda(\cdot)$ using the given observations. We consider the population loss function:
\[
\begin{aligned}
\Psi(\theta_h,\theta_d) & =\sum_{m=1}^M \mathbb E[ (x_m - F(m;\theta_h,\theta_d)\Delta t)^2 | x_{m-d},\ldots,\omega_{m-1} ]\\
& = \sum_{m=1}^M \Bigg\{ \mathbb E[ x_m^2 | x_{m-d}^{m-1}] - 2 \mathbb E[ x_m\cdot F(m;\theta_h,\theta_d)\Delta t | x_{m-d}^{m-1}]  + \mathbb E[  F^2(m;\theta_h,\theta_d)(\Delta t)^2 | x_{m-d}^{m-1}] \Bigg\}\\
& \propto \sum_{m=1}^M (\tilde \lambda(m) -  F(m;\theta_h,\theta_d))^2
\end{aligned}.
\]
Thus the optimizer will equal to $\tilde \lambda(n)$ as long as the function class $\textit{RNN-ODE}(d_{\mathrm{out}}, L_h,p_h, L_d,p_d)$ is rich enough to model the structure of the true intensity function.
\end{proof}

\begin{proof}[Proof of the claim in Remark \ref{rmk:theory}]
Note that when there is no event happening within the time interval $((m-1)\Delta t, m\Delta t]$ or when there is one event happening at $m\Delta t$, we have $|\tilde \lambda(m) - \lambda^*(t)|\leq C\alpha\Delta t$ where $C$ is a constant related to the Lipschitz constant of the influence kernel $\phi(\cdot)$. And when there is one event happening in $((m-1)\Delta t, m\Delta t)$, we have $|\tilde \lambda(n) - \lambda^*(t)|\leq \alpha$. By the concentration of Poisson distribution, there exists positive constant $M'$ such that there are at most $M'$ events happening within $[0,T]$ with high probability, and there is at most one event in each sub-interval $[(m-1)\Delta t, m\Delta t]$ for $M$ sufficiently large, we have that $\int_0^T |\tilde\lambda(t) - \lambda^*(t)|dt \leq C\alpha T \Delta t + M'\alpha\Delta t \to 0$ as $M\to\infty$.     
\end{proof}

\subsection{Proofs in Section \ref{subsec:approx-analysis}}\label{app:approx-proof}

% \vspace{0.5em}
\begin{remark}[Expressiveness of the model]
The hidden state $h(t)\in\R^{d_h}$ in \eqref{eq:dyn-sys} encodes the historical data, enabling $x(t)$ to be time-inhomogeneous. This raises the question regarding the expressiveness of Eq. \eqref{eq:dyn-sys} in representing a general dynamical system described by $x'(t)=F(x(t),t)$. 
There exist works that explored the expressiveness of the system $h'(t)=f_\theta(h(t),x(t)), y(t)=g_\phi(h(t))$, where $f_\theta, g_\phi$ are neural networks and $f_\theta$ possesses a RNN structure \cite{funahashi1993approximation, chow2000modeling, li2005approximation, li2022approximation}. Among these works, \cite{funahashi1993approximation, chow2000modeling, li2005approximation} assumed that $x(t)$ was generated from the underlying dynamics \eqref{eq:dyn-sys}, and thus the approximation problem was reduced to estimating $f$ and $g$ using neural networks $f_\theta$ and $g_\phi$. On the other hand, \cite{li2022approximation} took into account a broader range of input-output relationships. Specifically, it studied the expressiveness of the linear RNN structure in representing functionals $H_t$ that determined the output at time $t$ according to $H_t(\{x(\tau), \tau\in \mathcal{T}\})$, where $\mathcal{T}$ is an ordered index set (e.g., $\mathcal{T}=[0,T]$). \cite{li2022approximation} mainly focused on the case when $\{H_t(\{x(\tau), \tau\in \mathcal{T}\})\}$ is linear and time-homogeneous. 

Our approximation analysis bears more resemblance to the first category of studies and examines the approximation error for the discretely observed data.

\end{remark}
\vspace{0.5em}
\begin{remark}[Time-homogeneous dynamical systems]
For a time-homogeneous dynamical system $x'(t)=F(x(t))$, it can be represented as Eq. \eqref{eq:dyn-sys} by setting $d_h=D$, $f(h,x)=F(h)$, and $g(h)=h$. 
Theorems \ref{thm:continuous-error} and \ref{thm:discret-error} indicate that neural networks $f_\theta,g_\phi$ can be configured such that the observed data is approximated to any pre-specified accuracy. 
Prior studies \cite{funahashi1993approximation, chow2000modeling, li2005approximation, li2022approximation} proved that the system $x'(t)=F(x(t))$ could also be approximated using a continuous-time RNN, although without upper bounding the network size.

\end{remark}

\vspace{0.5em}

\paragraph{Proof of Lemma \ref{lemma:func-approx}}

Following the notations in \cite{{yarotsky2017error}}, we consider Sobolev space $\mathcal{W}^{n,\infty}([-1,1]^d)$, with $n=1,2,\ldots$, defined as the space of functions on $[-1,1]^d$ lying in $L^\infty$ with their weak derivatives up to order $n$.
From the proof of \cite[Theorem 1]{yarotsky2017error}, for any $f:[-1,1]^d\to\R$ such that $f\in \mathcal{W}^{n,\infty}([-1,1]^d)$ and $\epsilon>0$, there exists a neural network $\widetilde{f}$ such that $\max_{x\in [-1,1]^d}|f(x)-\widetilde{f}(x)| < \epsilon$, and $\widetilde{f}$ has $O(\ln(d+1)(\ln(\frac{\alpha_f}{\epsilon})+1))$ layers and $O(2^{d(d+1)}d^{d+2}\ln(d+1)^2 (\frac{2 \beta_f}{\epsilon})^{\frac{d}{n}}(\ln(\frac{\alpha_f}{\epsilon})+1))$ trainable parameters, where $\alpha_f=\|f\|_{\mathcal{W}^{n,\infty}([-1,1]^d)}\coloneqq \max_{\mathbf{n}:|\mathbf{n}|\leq n}\text{ess sup}_{x\in[-1,1]^d} |D^{\mathbf{n}}f(x)|, \beta_f \coloneqq \max_{\mathbf{n}:|\mathbf{n}|=1 }\text{ess sup}_{x\in[-1,1]^d} |D^{\mathbf{n}}f(x)|$.

In our case, we take $n=1$. For $f:[-1.1,1.1]^{d_h}\times [-1,1]^D\to \R^{d_h}$ that is Lipschitz in both $\eta$ and $x$, we define $L^{f,h}, L^{f,x}$ as follows:
\begin{equation}\label{eq:def-L^fh}
\begin{aligned}
L^{f,h} & \coloneqq \sup_{x\in [-1,1]^D}\sup_{\eta_1,\eta_2\in [-1.1,1.1]^{d_h}} \frac{\|f(\eta_1,x)-f(\eta_2,x)\|}{\|\eta_1-\eta_2\|},  \\
L^{f,x} & \coloneqq \sup_{h\in [-1.1,1.1]^{d_h}}\sup_{x_1,x_2\in [-1,1]^D} \frac{\|f(\eta,x_1)-f(\eta,x_2)\|}{\|x_1-x_2\|}.     
\end{aligned}
\end{equation}

For $\tilde{f}=(\tilde{f}_1,\dots,\tilde{f}_{d_h}):[-1,1]^{d_h}\times [-1,1]^D\to \R^{d_h}$ defined as $\tilde{f}(\tilde{\eta},x)\coloneqq f(1.1\tilde{\eta},x)$, we have that $\alpha_{\tilde{f}_i}\leq 1.1C_f, \beta_{\tilde{f}_i}\leq 1.1C_f$, $i=1,\dots,d_h$. Therefore, there exist $d_h$ subnetworks, denoted as $\hat{f}_1,\dots,\hat{f}_{d_h}$, such that 
\[\max_{\tilde{\eta}\in[-1,1]^{d_h},x\in[-1,1]^{D}}|\tilde{f}_i(\tilde{\eta},x)-\hat{f}_i(\tilde{\eta},x)|< \frac{\epsilon_f}{\sqrt{d_h}},\]
and each subnetwork has $O(\ln(\frac{1.1C_f}{\epsilon_f})+\ln d_h+1)$ layers 
% \lx{is it $C_f$ or $1.1C_f$?} 
and $O( (\frac{ 2.2C_f}{\epsilon_f})^{{d_h+D}}(\ln(\frac{1.1C_f}{\epsilon_f})+\ln d_h+1))$ weights, where the constants of big-$O$ notations depend on $d_h$ and $D$. 

Thus, we can construct $\tilde{f}_\theta$ as a network consisting of $d_h$ parallel sub-networks that implement each of $\hat{f}_i$. Then, for $f_\theta(\eta,x)\coloneqq \tilde{f}_\theta(\frac{1}{1.1}\eta,x)$,
\begin{align*}
\max_{\eta\in[-1.1,1.1]^{d_h},x\in[-1,1]^{D}}\|f(\eta,x)-f_\theta(\eta,x)\|_2 &= \max_{\tilde{\eta}\in[-1,1]^{d_h},x\in[-1,1]^{D}}\|\tilde{f}(\tilde{\eta},x)-\tilde{f}_\theta(\tilde{\eta},x)\|_2 \\
&\leq \sqrt{d_h}\max_{\tilde{\eta}\in[-1,1]^{d_h},x\in[-1,1]^{D}} \|\tilde{f}(\tilde{\eta},x)-\tilde{f}_\theta(\tilde{\eta},x)\|_\infty<\epsilon_f.    
\end{align*}
$f_\theta$ has $O(\ln(\frac{1.1C_f}{\epsilon_f})+\ln d_h+1)$ layers and $O( (\frac{ 2.2C_f}{\epsilon})^{{d_h+D}}(\ln(\frac{1.1C_f}{\epsilon_f})+\ln d_h+1))$ weights, where the constants of big-$O$ notations depend on $d_h$ and $D$. Specifically, 
\begin{align*}
&\# (\text{layers of } f_\theta) \leq C\ln(d_h+D+1)(\ln(\frac{1.1C_f}{\epsilon_f})+\ln d_h+1),\\
&\# (\text{weights of } f_\theta) \leq C2^{(d_h+D)(d_h+D+1)}(d_h+D)^{d_h+D+2}d_h^{\frac{d_h+D+2}{2}}\ln(d_h+D+1)^2 \\
&\qquad\qquad\qquad\qquad\qquad\cdot(\frac{ 2.2C_f}{\epsilon})^{{d_h+D}}(\ln(\frac{1.1C_f}{\epsilon_f})+\ln d_h+1),    
\end{align*}
for some absolute constant $C>0$. 
$g_\phi$ can be constructed similarly. We define $\tilde g=(\tilde{g}_1,\dots,\tilde{g}_{D'}): [-1,1]^{d_h}\to \R^{D'}$ as $\tilde g(\tilde{\eta})\coloneqq g(1.1\tilde{\eta})$. Then, there exist $D'$ subnetworks, denotes as $\hat{g}_1,\dots,\hat{g}_{D'}$, such that
\[\max_{\tilde{\eta}\in [-1,1]^{d_h}} \left\|\tilde{g}_i(\tilde{\eta}) - \hat{g}_i(\tilde{\eta})\right\|_2 < \frac{\epsilon_g}{\sqrt{D'}},\quad i=1,\dots,D',\]
and each subnetwork has $O(\ln(\frac{1.1C_g}{\epsilon_g})+\ln D'+1)$ layers and $O( (\frac{ 2.2C_g}{\epsilon_g})^{d_h}(\ln(\frac{1.1C_g}{\epsilon_g})+\ln D'+1))$ weights, where the constants of big-$O$ notations depend on $d_h$ and $D'$. We construct $\tilde{g}_\phi$ as a network consisting of $D'$ parallel subnetworks that implements $\{\hat{{g}_i}\}$. Then, for $g_\phi(\eta)\coloneqq \tilde{g}_\phi(\frac{1}{1.1}\eta)$,
\[\max_{\eta\in [-1.1,1.1]^{d_h}} \left\|g(\eta) - g_\phi(\eta)\right\|_2 < \epsilon_g,\]
and
\begin{align*}
&\# (\text{layers of } g_\phi) \leq C\ln(d_h+1)(\ln(\frac{1.1C_g}{\epsilon_g})+\ln D'+1),\\
&\# (\text{weights of } g_\phi) \leq C2^{d_h(d_h+1)}d_h^{d_h+2}D'^{\frac{d_h+2}{2}}\ln(d_h+1)^2 (\frac{2.2C_g}{\epsilon_g})^{d_h} \cdot(\ln(\frac{1.1C_g}{\epsilon_g})+\ln D'+1).
%\\&\qquad\qquad\qquad\qquad\qquad
\end{align*}
This proves the claim.

\paragraph{Proof of Theorem \ref{thm:continuous-error}. }  

\begin{proof}[Proof of Theorem \ref{thm:continuous-error}]

We denote $u(t)\coloneqq \|h(t) - h_{\text{NN}}(t)\|$, and $t_0 = \inf_{t\in [0,T]}\{u(t) \geq 0.1\}$.  Since $u(0)=0$ and $u(t)$ is continuous, we know that $t_0>0$. In the following, we show that $t_0=T$ by contradiction. Otherwise, suppose that $t_0 < T$. Then for $t\in [0,t_0]$, $u(t)=\|h(t) - h_{\text{NN}}(t)\|\leq 0.1$, which implies that $h_{\text{NN}}(t) \in [-1.1,1.1]^{d_h}$.

Then, by \eqref{eq:func-approx}, for $t\in [0,t_0]$,
\begin{align*}
u(t) &= \left\|\int_0^{t} \left(f(h(s),x(s)) - f_\theta(h_{\text{NN}}(s), x(s))\right)\mathrm{d}s\right\| \\
&\leq   \int_0^{t}\left\|f(h(s),x(s)) - f_\theta(h_{\text{NN}}(s), x(s))\right\| \mathrm{d}s \\
&\leq \int_0^{t} \left(\epsilon_f + L(s)\|h(s) - h_{\text{NN}}(s)\|\right) \mathrm{d}s, 
\end{align*}
where 
\[L(s) = L^{f,h}_i,\quad \text{if } s\in [t_{i-1},t_i], i=1,\dots,n+1, \]
and $\{t_i\}_{i=1}^n$ the time grid corresponding to the partition $\mathcal{D}_1\cup\mathcal{D}_2$ such that $\frac{1}{T}\sum_{i=1}^{n+1} L^{f,h}_i (t_i-t_{i-1})\leq\frac{1}{T} ( L_{\rm low} |D_1| + L_{\rm high} |D_2|)= L^{(\rm avg)}$, and $L^{f,h}_i$ is defined as in \eqref{eq:def-local-Lipschitz} with taking the interval $[s,t]=[t_{i-1},t_i]$.
Therefore, 
\[u(t)\leq \epsilon_ft + \int_0^t L(s)u(s)\mathrm{d}s,\quad t\in [0,t_0].\]
By the Gr$\ddot{\text{o}}$nwall's inequality,
\[u(t) \leq \epsilon_ft\exp(\int_0^t L(s)\mathrm{d}s) \leq \epsilon_fT\exp(\sum_{i=1}^{n+1} L^{f,h}_i (t_i-t_{i-1})) \leq \epsilon_fT\exp(L^{(\rm avg)}T) < 0.1, \quad t\in [0,t_0].\]
Specifically,
\[u(t_0) \leq \epsilon_fT\exp(L^{(\rm avg)}T) < 0.1.\]
Since $u(t)$ is continuous, there exists a sufficiently small $\delta>0$, such that $u(t) <  0.1$ for $t\in [t_0,t_0+\delta]$. This is a contradiction to the definition of $t_0$. Thus, we conclude that $t_0 = T$, and therefore $h_{\text{NN}}(t)\in [-1.1,1.1]^{d_h}$, $\forall t\in [0,T]$. By the similar analysis above, we have that
\[u(t) \leq \epsilon_fT\exp(L^{(\rm avg)}T),\quad t\in [0,T].\]
Thus, for $t\in [0,T]$,
\begin{align*}
\|y(t)-y_{\text{NN}}(t)\| &\leq \| g(h(t))-g(h_{\text{NN}}(t))\| + \| g(h_{\text{NN}}(t))-g_\phi(h_{\text{NN}}(t))\| \\
&\leq L_gT\exp(L^{(\rm avg)}T)\epsilon_f + \epsilon_g,
\end{align*}
which proves the claim.
\end{proof}

\paragraph{Proof of Theorem \ref{thm:discret-error}. }

\begin{proof}[Proof of Theorem \ref{thm:discret-error}]

Denote $\varepsilon_i = h(t_i) - \hat{h}_{\text{NN}}(t_i)$, then $\varepsilon_0=0$.
In the following, we apply the induction argument, iteratively showing that
\begin{equation}\label{eq:proof1}
\|\varepsilon_i\|\leq e_h<0.1,\quad \hat{h}_{\text{NN}}(t_i)\in [-1.1,1.1]^{d_h},\quad i=0,\dots, N,    
\end{equation}
where
\[e_h\coloneqq T\exp(\sum_{i=1}^N L^{f,h}_i\Delta t_i)  \left(\epsilon_f + \max_{j}\{\mu_j\Delta t_j\}\right).\]

For $i=0$, \eqref{eq:proof1} naturally holds since $h(0)=\hat{h}_{\text{NN}}(0)$.
If \eqref{eq:proof1} holds For $i\leq k$, then for $i=k+1$, by $\|\varepsilon_k\| < 0.1$,  $\hat{h}_{\text{NN}}(t_{k})\in [-1.1,1.1]^{d_h}$. From the definition of $\varepsilon_{k+1}$, 
\begin{align*}
\varepsilon_{k+1} &= \left\|\left(h(t_{k}) + \int_{t_{k}}^{t_{k+1}} f(h(s),x(s))\mathrm{d}s\right) - \left(\hat{h}_{\text{NN}}(t_{k}) + \Delta t_{k+1} f_\theta(\hat{h}_{\text{NN}}(t_{k}), x(t_{k}))\right)\right\|   \\
&\leq \varepsilon_{k} + \int_{t_{k}}^{t_{k+1}} \left\|f(h(s),x(s))-f_\theta(\hat{h}_{\text{NN}}(t_{k}), x(t_{k}))\right\|\mathrm{d}s.
\end{align*}
Next, we upper bound the second term. By the triangle inequality, $\hat{h}_{\text{NN}}(t_{k})\in [-1.1,1.1]^{d_h}$ and \eqref{eq:func-approx}, for $s\in [t_{k},t_{k+1}]$,
\begin{align*}
&\quad\left\|f(h(s),x(s))-f_\theta(\hat{h}_{\text{NN}}(t_{k}), x(t_{k}))\right\| \\
&\leq  \left\|f(h(s),x(s))-f(\hat{h}_{\text{NN}}(t_{k}),x(t_{k})\right\| + \left\|f(\hat{h}_{\text{NN}}(t_{k}),x(t_{k})-f_\theta(\hat{h}_{\text{NN}}(t_{k}),x(t_{k}))\right\|\\
&\leq \left\|f(h(s),x(s))-f(h(t_{k}),x(s))\right\| + \left\|f(h(t_{k}),x(s))-f(h(t_{k}),x(t_{k}))\right\| \\
&\quad + \left\|f(h(t_{k}),x(t_{k})) - f(\hat{h}_{\text{NN}}(t_{k}),x(t_{k}))\right\| + \epsilon_f \\
&\leq    (L^{f,h}_{k+1}M_{k+1}^f +L^{f,x}_{k+1}L^x_{k+1}  )\Delta t_{k+1} + L^{f,h}_{k+1}\Delta t_{k+1}\varepsilon_{k} +  \epsilon_f,
\end{align*}
where the first component is due to $\|h(s)-h(t_{k})\|=\|\int_{t_{k}}^s f(h(u),x(u))du\|\leq M_{k+1}^f \Delta t_{k+1}$ and the second term results from $|x(s)-x(t_k)|\leq L^x_{k+1}\Delta t_{k+1}$.

This implies that 
\begin{equation}\label{eq1}
\varepsilon_{k+1} \leq (1+L^{f,h}_{k+1}\Delta t_{k+1})\varepsilon_{k} + \gamma_{k+1},    
\end{equation}
where 
\[\gamma_{k+1}\coloneqq \epsilon_f\Delta t_{k+1} + (L^{f,h}_{k+1}M_{k+1}^f +L^{f,x}_{k+1}L^x_{k+1})\Delta t_{k+1}^2 = \epsilon_f\Delta t_{k+1} +\mu_{k+1}\Delta t_{k+1}^2.\]

From \eqref{eq1}, we obtain that
\begin{align*}
\varepsilon_{k+1} \leq \sum_{j=1}^{i+1} \left(\gamma_j \cdot \prod_{l=j+1}^{k+1} (1+L^{f,h}_l\Delta t_l)\right).    
\end{align*}
Since $1+x\leq \exp(x)$, 
\[\prod_{l=j+1}^{k+1} (1+L^{f,h}_l\Delta t_l)\leq \exp(\sum_{l=j+1}^{k+1}L^{f,h}_l\Delta t_l)\leq \exp(\sum_{l=1}^{N}L^{f,h}_l\Delta t_l).\] 
Hence,  we have
\begin{align*}
\varepsilon_{k+1} &\leq \exp(\sum_{i=1}^{N}L^{f,h}_i\Delta t_i)\sum_{j=1}^{N} \gamma_j = \exp(L^{f,h}T)\left( \sum_{j=1}^N\epsilon_f\Delta t_j + \mu_j\Delta t_j^2\right) \\
&\leq T\exp(\sum_{i=1}^N L^{f,h}_i\Delta t_i)  \left(\epsilon_f + \max_{j}\{\mu_j\Delta t_j\}\right) = e_h.
\end{align*}
Therefore, \eqref{eq:proof1} holds for $i=k+1$. By the induction argument, \eqref{eq:proof1} is true for $i=1,\dots,N$.

Finally, for $i=1,\dots,N$, by triangle inequality and the fact that $\hat{h}_{\text{NN}}(t_i)\in [-1.1,1.1]^{d_h}$, applying \eqref{eq:func-approx} results in
\begin{align}
\|y(t_i)-\hat{y}_{\text{NN}}(t_i)\|
&=\| g(h(t_i)) - g_\phi(\hat{h}_{\text{NN}}(t_i))\|
\nonumber\\
&\leq \| g(h(t_i))-g(\hat{h}_{\text{NN}}(t_i))\| + \| g(\hat{h}_{\text{NN}}(t_i))-g_\phi(\hat{h}_{\text{NN}}(t_i))\| \\
& \leq \epsilon_g + L_g\varepsilon_{k+1} < \epsilon_g + L_ge_h.   \nonumber 
\end{align}
This proves the claims in Theorem \ref{thm:discret-error}.
\end{proof}

\section{Experimental Details}\label{sec:append-exp}

\subsection{Implementation Details}\label{sec:detail}

Given the training windows $\{\boldsymbol{x}^{(\Tr,k)}\}_{k=1}^{K^{(\Tr)}}$, the algorithm \ref{alg_adaptive} is used as a preprocessing step to prepare {\it each} training window to the irregular sub-window with adaptive time steps. The resulting adaptive training windows are then used to train the neural ODE model \eqref{eq:ode}-\eqref{eq:output} using the mean-squared loss function \eqref{eq:training_loss1}.
During the inference phase, the learned ODE model \eqref{eq:ode} will be used for fitting and prediction tasks. It can be used for arbitrary and irregular (future) time steps. 

\paragraph{Choice of Monitor Functions.}
    The monitor function in Algorithm \ref{alg_adaptive} can be chosen flexibly, not restricted to the maximum variation defined in \eqref{eq:variation}. Since this work mainly uses non-stationary time series with ``spike''-like patterns as an example, the monitor function \eqref{eq:variation} is a natural choice for identifying abrupt ``spikes''. In general,  the monitor function may be designed case-by-case depending on the problem context. For example, when modeling the event-type counting process (such as the Hawkes process simulated in Section \ref{sec:hawkes}), where $x_i\in \mathbb{N}$ is the number of events in the current time interval, we may choose to use    
    the maximum counts $M(x_i,\ldots,x_j) :=\max\{x_{i}, \ldots, x_{j}\}$. By setting the threshold $\epsilon \in (0,1)$, such a monitor function will assign the finest time steps to intervals with events ($x_i>0$) and use rough time steps for regions without events ($x_i=0$).   

\paragraph{Choice of Threshold in Algorithm \ref{alg_adaptive}.} The choice of the selection threshold $\epsilon$ used in Algorithm \ref{alg_adaptive} can be selected from training data via simulation. In detail, note that a larger threshold $\epsilon$ would lead to a sparser set of selected time steps (the output of Algorithm \ref{alg_adaptive}). Therefore, we primarily determine the threshold $\epsilon$ by calibrating the number of remaining time stamps after applying Algorithm \ref{alg_adaptive}, allowing us to control the desired efficiency. 
{Specifically, we employ a validation data set to calibrate the selection of threshold values, ensuring that the chosen $\epsilon$ yields the desired average lengths for the adaptively selected time steps. As illustrated in Figure \ref{fig:ECG_thres}, an example of ECG data demonstrates the influence of the threshold parameter $\epsilon$ on the chosen grids. It can be observed that an increasing $\epsilon$ leads to a reduction in the number of selected grids. Furthermore, for each value of $\epsilon$, the chosen grids correspond to sub-intervals with greater variation.
}

\begin{figure}[t]
    \centering
\includegraphics[width=1\linewidth]{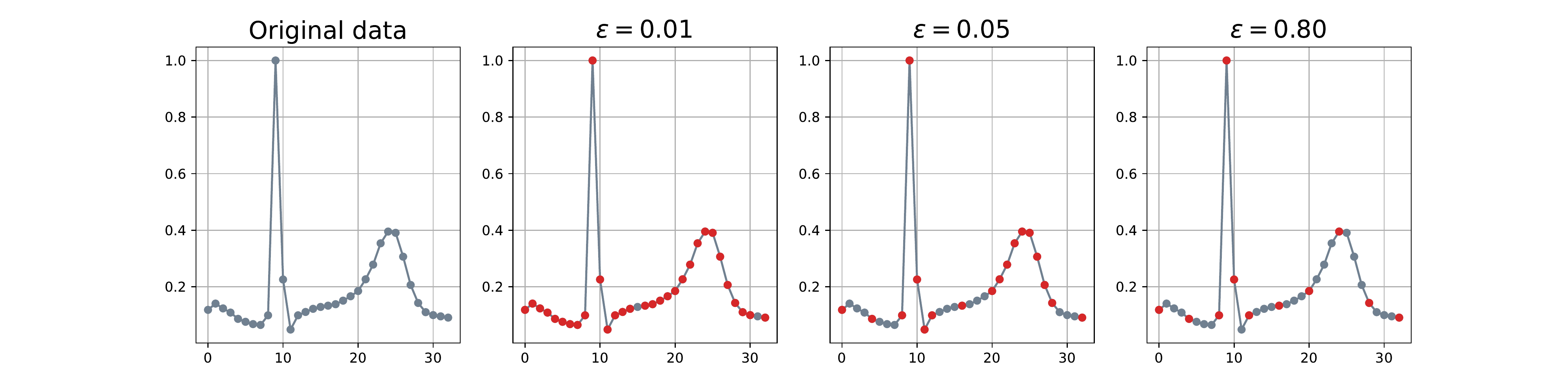}
    \caption{Illustration of adaptively selected time steps with different thresholds (ECG data). The gray dots depict the original data points and the red points illustrate the time steps selected by Algorithm \ref{alg_adaptive} using threshold $\epsilon=0.01$, $0.05$, and $0.8$, and $L=3$.}
    \label{fig:ECG_thres}
\end{figure}

\paragraph{Evaluation Metric.}

To compare the {\it multi-step prediction} performance of different time series models, we use the mean-squared multi-step ahead prediction error as follows. After obtaining the fitted model, we can use the trained networks \eqref{eq:ode}-\eqref{eq:output} to make predictions on the testing windows $\{\boldsymbol{x}^{(\Te,k)}\}_{k=1}^{K^{(\Te)}}$, where $\boldsymbol{x}^{(\Te,k)}=\{x^{(\Te, k)}(t^{(\Te,k)}_{1}),\dots,x^{(\Te, k)}(t^{(\Te,k)}_{n})\}$.
Given a historical trajectory $\{x(t_{1}),\dots,x(t_{n})\}$, we can apply the fitted model to perform multi-step ahead prediction
\begin{align}
    \hat h(t_{i+1}) &=\hat h(t_{i}) + \begin{cases}
     \int_{t_{i}}^{t_{i+1}} f(\hat h(s), x(s); \theta_h)ds, \quad & \text{when } i\leq n,\\
     \int_{t_{i}}^{t_{i+1}} f(\hat h(s), \hat x(s); \theta_h)ds, \quad & \text{when } n<i\leq n+m,\end{cases}\label{eq:predict}\\
    \hat x(t_{i+1}) &= g(\hat h(t_{i+1});\theta_d),\notag
\end{align}
which will be iteratively solved for $i=1,\ldots,n+m$. 
The first ODE can be solved by, for instance, the Euler method.  
When comparing the $m$-step ahead prediction performance of different methods, we use the averaged $\ell_2$ norm of the prediction error of length $m$; specifically, we take $n=\lfloor N/2\rfloor, m=N-n$ in the experiments in Section \ref{sec:numerical} and perform the prediction in Eq. \eqref{eq:predict} for each testing window $\boldsymbol{x}^{(\Te,k)}$ with the predicted value denoted as $\hat{x}^{(\Te, k)}(\cdot)$, then the resulting prediction performance on test data is measured as follows
\begin{align}\label{eq:pred-error}
\text{MSE}_{\text{pred}} &= \frac{1}{K^{(\Te)}}\sum_{k=1}^{K^{(\Te)}}\left(\frac{1}{m}\sum_{i=n+1}^{n+m} \left\|x^{(\Te, k)}(t_{i}^{(\Te,k)}) - \hat{x}^{(\Te, k)}(t_{i}^{(\Te,k)})\right\|^2\right)^{1/2}.%\label{eq:MSE}
\end{align} 
We could also use other reasonable metrics that measure the discrepancy between times series data, such as the averaged $\ell_1$ norm or the dynamic time warping distance \cite{sakoe1978dynamic}.

To compare the {\it one-step prediction} performance of different methods on the intensity function of the event data generated from Hawkes processes which is described in Section \ref{sec:hawkes}, we employ the error as defined in \eqref{eq:fit-error}, which has the similar form to the training loss \eqref{eq:training_loss1}: 
\begin{align}\label{eq:fit-error}
\text{MSE}_{\text{fit}} = \frac{1}{K^{(\Te)}}\sum_{k=1}^{K^{(\Te)}}\sum_{i=1}^{N}  \left\| \hat{\lambda}^{(\Te, k)}(t_{i}^{(\Te,k)}) - \lambda^{(\Te, k)}(t_{i}^{(\Te,k)}) \right\|^2 |t_{i}^{(\Te,k)} - t_{i-1}^{(\Te,k)}|,
\end{align}
here the superscript $^{(\Te)}$ denotes that the error is evaluated on the testing data. $\lambda$ and $\hat{\lambda}$ denote the true and fitted intensity functions of the event data, respectively. In this case, $\hat{\lambda}$ is obtained by iteratively solving \eqref{eq:predict} with $n=N-1$ and $m=0$.

\paragraph{Buffer Steps.}

To facilitate training and improve the performance of the models, we leverage additional ``buffer steps'' at the beginning of each window to mitigate the effect of the zero initialization of the hidden states. Buffer steps refer to the additionally padded time stamps before each window. Specifically, for the $k$-th training window $\{x^{(\Tr,k)}(t_{i}^{(\Tr,k)})\}_{i=1}^{N}$, adding $m$ buffer steps means that the original time series is augmented to $\{x^{(\Tr,k)}(t_{i}^{(\Tr,k)})\}_{i=-m}^{N}$, where for $i=-m,\dots,-1$, $t_{i+1}^{(\Tr,k)}-t_{i}^{(\Tr,k)} =\Delta t \coloneqq \min_{i=0,\dots,N-1}\{t_{i+1}^{(\Tr,k)}-t_{i}^{(\Tr,k)}\}$.
{An illustration of the buffer steps for the discrete point process data is shown in Figure \ref{fig:Hawkes_buffer_ex}.}
Detailed information on buffer steps, pertaining to the experiments in Section \ref{sec:numerical}, along with additional experiments examining the impact of incorporating buffer steps and selecting the appropriate number of buffer steps using validation data, can be found in Appendix \ref{sec:append-buffer}.

\begin{figure}[tp]
    \centering
\includegraphics[width=0.45\textwidth]{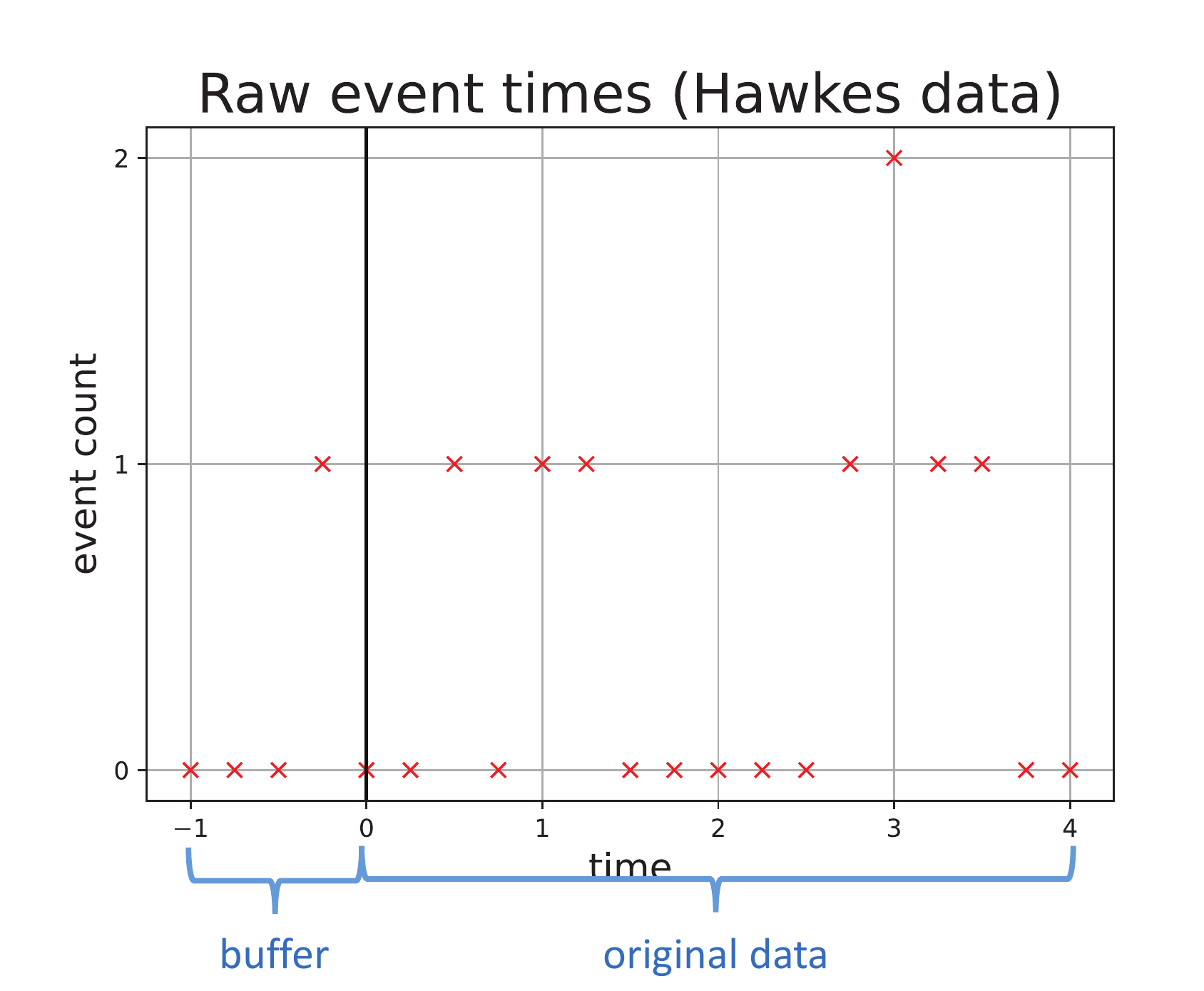}
    \caption{Illustration of buffer steps 
    constructed on a discrete-time event data generated from a Hawkes process. }
    \label{fig:Hawkes_buffer_ex}
\end{figure}

\subsection{Network Structure}\label{sec:net-struct}

In the experiments, we use the same network structure for RNN, RNN-ODE, and RNN-ODE-Adap, namely the ODE function $f$ follows the vanilla RNN structure
\[f(h, x;\Theta_f) = \text{tanh}(W_f[h,x]+b_f), \]
where $h\in\R^{d_{h}}, W_f\in\R^{d_{h}\times (d_{h}+D)}, b_f\in\R^{d_{h}}$, and $\Theta_f=\{W_f,b_f\}$. RNN updates the hidden states discretely by $h_{t_n} = f(h_{t_{n-1}}, x_{t_n};\Theta_f)$.

Furthermore, in this paper, the output function $g$ for RNN, RNN-ODE, and RNN-ODE-Adap is taken as a fully connected (FC) layer
\begin{align}\label{eq:outputfunc}
 g(h;\Theta_g) = W_gh+b_g,   
\end{align}
where $h\in\R^{d_{h}}, W_g\in\R^{D\times d_{h}}, b_g\in\R^D$, and $\Theta_g=\{W_g, b_g\}$. In all the experiments, we take $d_{h}=128$. 

For the LSTM model, we use the same output function as in \eqref{eq:outputfunc} to decode hidden states and the vanilla LSTM block to update hidden states. The latter is detailed as
\begin{align*}
f_{\text{LSTM}}(h,c,x; \Theta_{f_{\text{LSTM}}}) &= o(h,x; \Theta_o) \odot \text{tanh}\left(\mathtt{c}(h,c,x; \Theta_{\mathtt{c}})\right),  % \\
\end{align*}
where
\begin{align*}
 o(h,x; \Theta_o) &= \sigma(W_o[h,x]+b_o),\\
 \mathtt{c}(h,c,x; \Theta_{\mathtt{c}}) &= p(h,x; \Theta_p)\odot c + i(h,x; \Theta_i) \odot q(h,x; \Theta_q),\\
i(h,x; \Theta_i) &= \sigma(W_i[h,x]+b_i),\\
p(h,x; \Theta_p) &= \sigma(W_p[h,x]+b_p),\\
q(h,x; \Theta_q) &= \sigma(W_q[h,x]+b_q),
\end{align*}
here $h,c\in \R^{d_{h}}, W_o, W_i, W_p, W_q\in \R^{d_{h}\times (d_{h}+D)}, b_o, b_i, b_p, b_q\in \R^{d_{h}}$, and the parameters in the LSTM model is denoted as $\Theta_{f_{\text{LSTM}}} = \{W_o, W_i, W_p, W_q, b_o, b_i, b_p, b_q\}$. LSTM updates the cell states and hidden states iteratively by $c_{t_n}=\mathtt{c}(h_{t_{n-1}}, c_{t_{n-1}}, x_{t_n};\Theta_{\mathtt{c}}), h_{t_n} = f_{\text{LSTM}}(h_{t_{n-1}}, c_{t_{n-1}}, x_{t_n};\Theta_f) = o(h_{t_{n-1}}, x_{t_n};\Theta_o)\odot \text{tanh}(c_{t_n})$.

\subsection{Training, Validation, and Testing Data Sets}

\subsubsection{Windows of the Finest Grids}\label{sec:append-window-finest}

In all the experiments, the results are obtained from multiple replicas. In each replica, training, validation, and testing windows of the finest grids are independently generated, and then used for training the neural networks, validating, and evaluating performance.

For the data in the spiral example, 50 of the total 500 training windows of length 65 are randomly chosen as validation data, and there are 500 testing windows of the same length. Each spiral follows the ODE system described in Section \ref{sec:spiral},  with $A$ perturbed. 5 windows are randomly chosen for each spiral sampled at 200 regular time steps. 

For event-time data generated from Hawkes process, 200 of the total 2000 training Hawkes sequences are randomly chosen as validation data, and there are 1000 testing sequences. Each sequence is generated with $\alpha=0.5, \mu(t)\equiv 0.5$ and an exponential kernel $\varphi(t) = 2 e^{-2 t}$. The data lies in physical time $[1,5]$. 
Note that the original data set only consists of the time stamps when the events happen. We need to further preprocess the original data to time series that indicate the number of events happening in small intervals. Specifically, we discretize the time space into uniform bins, transforming the continuous-time event times into discrete counts.

For ECG data, we select ten patients from the PTB-XL ECG dataset. For each patient, we have the 12-lead ECGs of 10-second length, with 50Hz frequency. We use 3-lead in our training and testing. Thus, there are in total 30 trajectories of length 500. The first 70\% and last 30\% of each trajectory are used to extract training and testing windows respectively to avoid overlapping. 300 of the total 3000 training windows are randomly chosen as validation data, and there are 900 testing windows. In this case, for each of the 30 trajectories, 100 training windows of length 97 are taken from the first 350 time steps, and 30 testing windows of the same length are from the last 150 time steps.

\subsubsection{Windows of the Predetermined Lengths}\label{sec:append-window-predeterm}

To get windows of a certain length, for RNN, LSTM, and RNN-ODE that use regular time steps, the original windows are interpolated to get regular time steps with the desired number of grids; RNN-ODE-Adap selects the time steps by adjusting hyper-parameters $\epsilon$ and $L$ in Algorithm \ref{alg_adaptive}, such that the averaged length of the adaptively selected validation windows is close to the desired length. 

The situation is different for the event-type data since the time when the event happens is available. In this case, RNN, LSTM, and RNN-ODE utilize the windows that count the number of events happening in time intervals formed by regular time steps of the required length. RNN-ODE-Adap first generates longer windows (and smaller $\Delta_t$) and then takes $L=1, \epsilon=0.5$ in Algorithm \ref{alg_adaptive} to generate windows with similar lengths to the required one. In this way, RNN-ODE-Adap utilizes windows with irregular time steps.

\subsection{More Details on Buffer Steps}\label{sec:append-buffer}

For RNN, LSTM, and RNN-ODE that use regular training time series $\{x_{t_i}\}_{i=0}^n$ with $t_{i+1}-t_i = \Delta t\; (i=0,\dots,n-1)$, adding $m$ buffer steps means that the original time series is augmented to $\{x_{t_i}\}_{i=-m}^n$, with $t_{i+1}-t_i = \Delta t\; (i=-m,\dots,n-1)$. 
For spiral and ECG data, we take $m=2$ and $x_{-2}=x_{-1}=x_0$. For the event-type data from the Hawkes process, we take $m=\lfloor \frac{n}{4} \rfloor$ and $\{x_{t_i}\}_{i=-m}^{-1}$ as the true event data, in this way $t_0 - t_{-m} = m \Delta t =  \lfloor \frac{n}{4} \rfloor \frac{4}{n} \approx 1$. Here $\Delta_t = \frac{4}{n}$ due to that the event-time data are generated in an interval with physical time $4$.

For RNN-ODE-Adap that is trained with irregular training time series $\{x_{t_i}\}_{i=0}^n$, we first find the minimal increment in time $\Delta t \coloneqq \min_i\{t_{i+1}-t_i\}$, then the series with $m$ buffer steps added is $\{x_{t_i}\}_{i=-m}^n$, with $t_{i+1}-t_i = \Delta t\; (i=-m,\dots,-1)$. For spiral and ECG data, we still take $m=2$ and $x_{-2}=x_{-1}=x_0$. For the event-time data, we use the same number of buffer steps as the other three methods for a fair comparison.

{Figure \ref{fig:Hawkes-buffer2} below presents two examples of the event-type data from the Hawkes process, illustrating the performance improvement achieved by incorporating buffer steps, which mitigate the effects of zero-initialized hidden states. The light and dark green lines represent the fitted intensity of RNN-ODE without and with buffer steps, respectively. It can be observed that, in the absence of buffer steps, the initial few steps are not accurately estimated due to zero initialization. The inclusion of buffer steps effectively eliminates this issue.}

\begin{figure}[htpb]
\centering
\includegraphics[width=\textwidth]{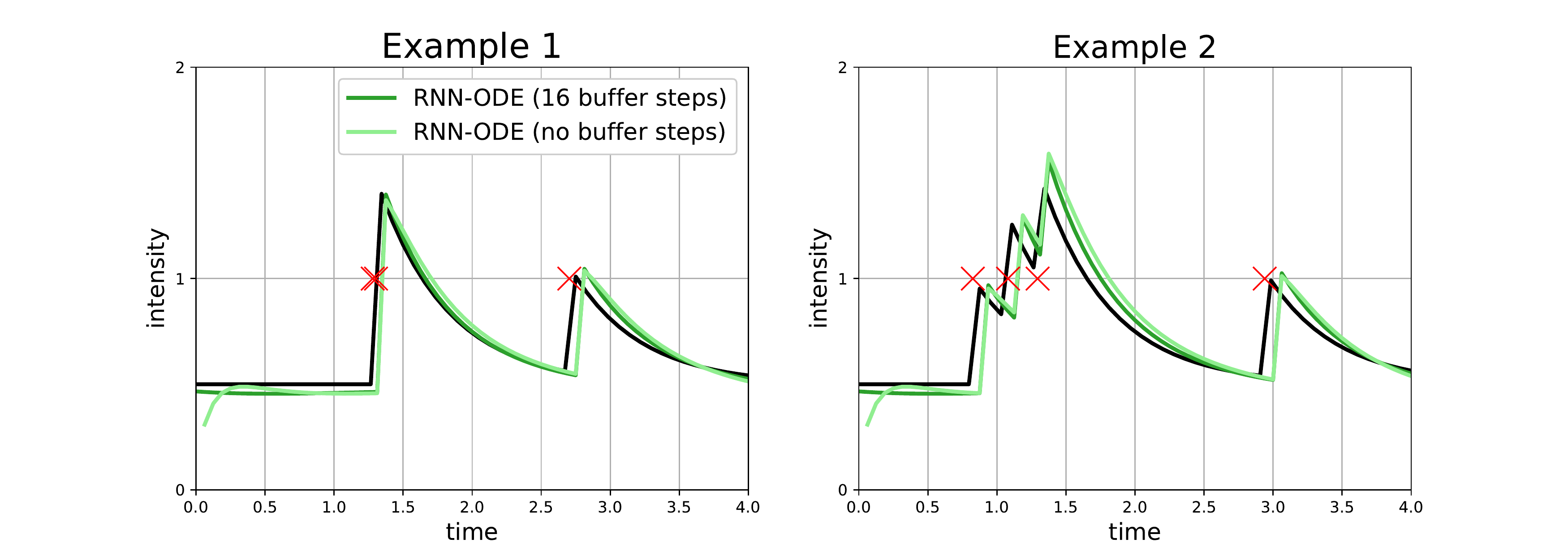}
\caption{Comparison of RNN-ODE models with or without buffer steps on the two examples of the discrete event-time data generated from the Hawkes process (the number of grids is 65).}
\label{fig:Hawkes-buffer2}
\end{figure}

We investigate more on the number of buffer steps for the event-type data. Specifically, Figure \ref{fig:Hawkes-buffer} below shows the fitting errors of RNN-ODE for different buffer steps when the number of grids is 65.  We remark that $\Delta t$ keeps the same for all the number of buffer steps, thus the number of buffer steps also reflects the physical buffer time used. It can be observed that as the number of buffer steps increases from 2 to 16, the fitting error decreases. This implies that for the data with long history dependence like the Hawkes process, enough buffer steps should be kept to circumvent non-stationary results. 
\begin{figure}[h]
    \centering
\includegraphics[width=0.43\linewidth]{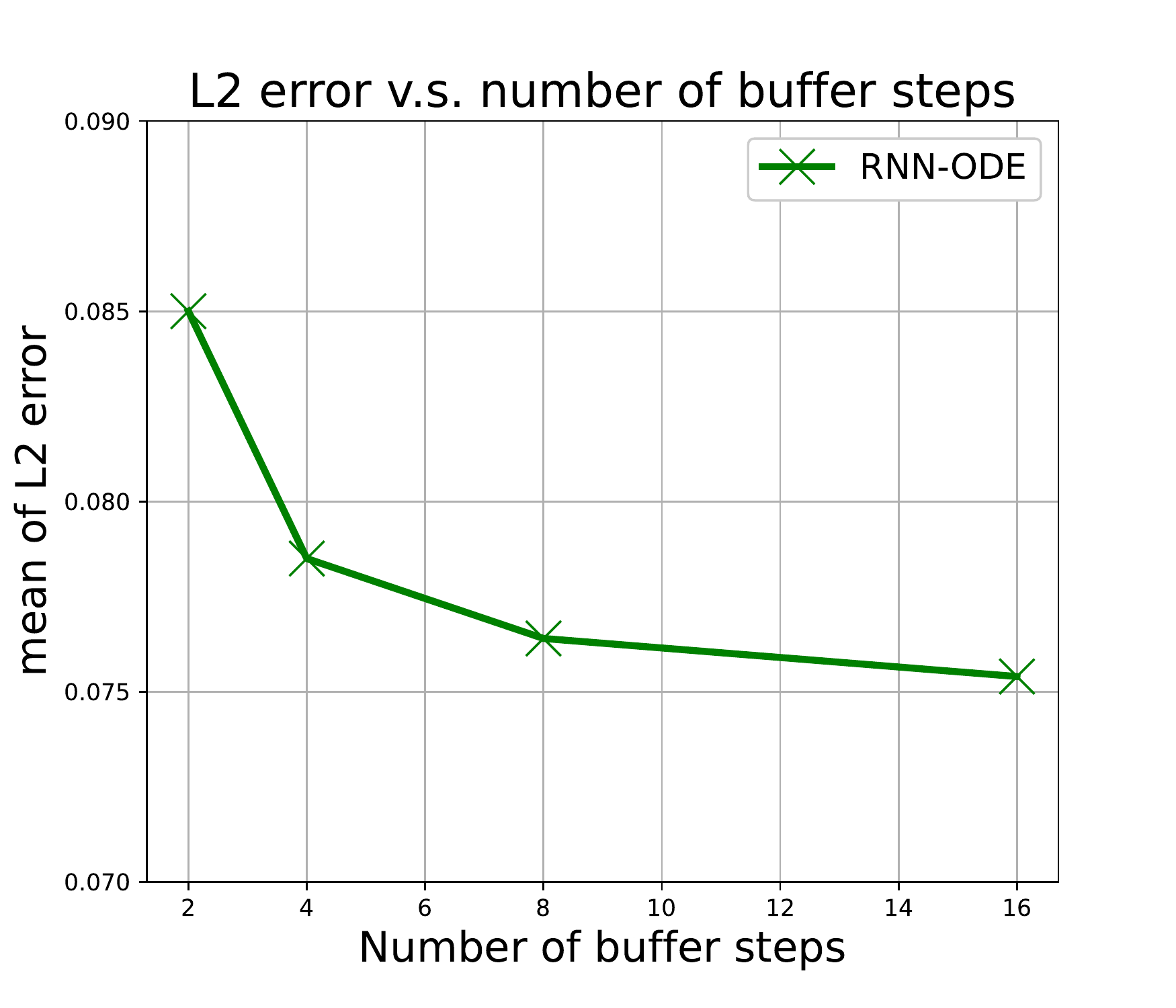}
    \caption{Comparison of the fitting errors versus the number of buffer steps for the discrete event-type data generated from the Hawkes process for RNN-ODE (the number of grids is 65).}
    \label{fig:Hawkes-buffer}
\end{figure}

\subsection{Other Implementation Details and Additional Results}\label{sec:add-exp-results}

We implement all the methods using PyTorch (Paszke et al., 2019), and all the experiments are run on a PC with 2.6 GHz 6-Core. We use Adam (Kingma \& Ba, 2014) for optimization. 
Moreover, additional numerical results (Boxplots for Section \ref{sec:numerical}) are given in Figures \ref{fig:spiral-error-boxplot}, \ref{fig-spiral-LSTM}, \ref{fig-spiral-Lipschitz}, \ref{fig-spiral-smaller-LSTM}, \ref{fig-Hawkes-const-deltat}, \ref{fig:ECG-error-boxplot}, and \ref{fig:ECG-example2}.

\paragraph{Boxplot of Figure \ref{fig:spiral-error}.} Figure \ref{fig:spiral-error-boxplot} shows the boxplot of the prediction errors of the models for the spiral data (the mean of MSE over replicas is plotted in Figure \ref{fig:spiral-error}). 

\begin{figure}[htpb]
    \centering
\includegraphics[width=0.5\linewidth]{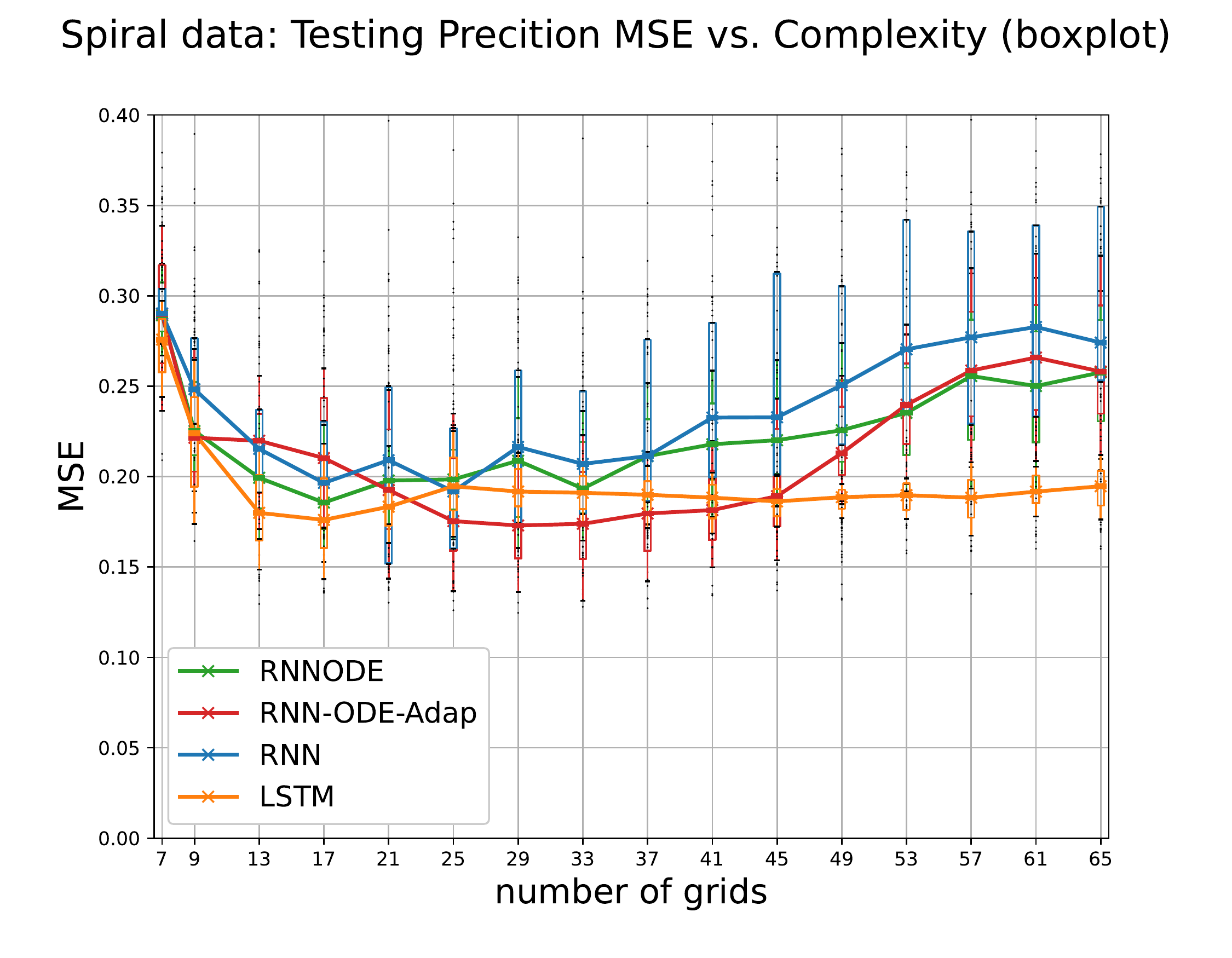}
    \caption{The boxplot of the prediction errors on the simulated spiral data from Eq. \ref{eq:spiral-ODE} for RNN, LSTM, RNN-ODE, and RNN-ODE-Adap. $x$ and $y$ axes have been explained in the caption of Figure \ref{fig:spiral-error}. }
    \label{fig:spiral-error-boxplot}
\end{figure}

\paragraph{LSTM and Lipschitz-RNN variants.} Figure \ref{fig-spiral-LSTM} shows the mean and the boxplot of the prediction errors of the models for the spiral data, including the LSTM variant of the adaptive model (which we refer to as \textit{LSTM-ODE-Adap} and  is plotted in the orange dashed lines). Similarly, Figure \ref{fig-spiral-Lipschitz} shows the mean and the boxplot of the prediction errors of the models for the spiral data, including the Lipschitz-RNN \cite{erichson2020lipschitz} and its adaptive variant (which we refer to as \textit{Lipschitz-RNN-Adap} and are plotted in the light and dark purple solid lines respectively). 

\begin{figure}[h]
\centering
\includegraphics[width=\textwidth]{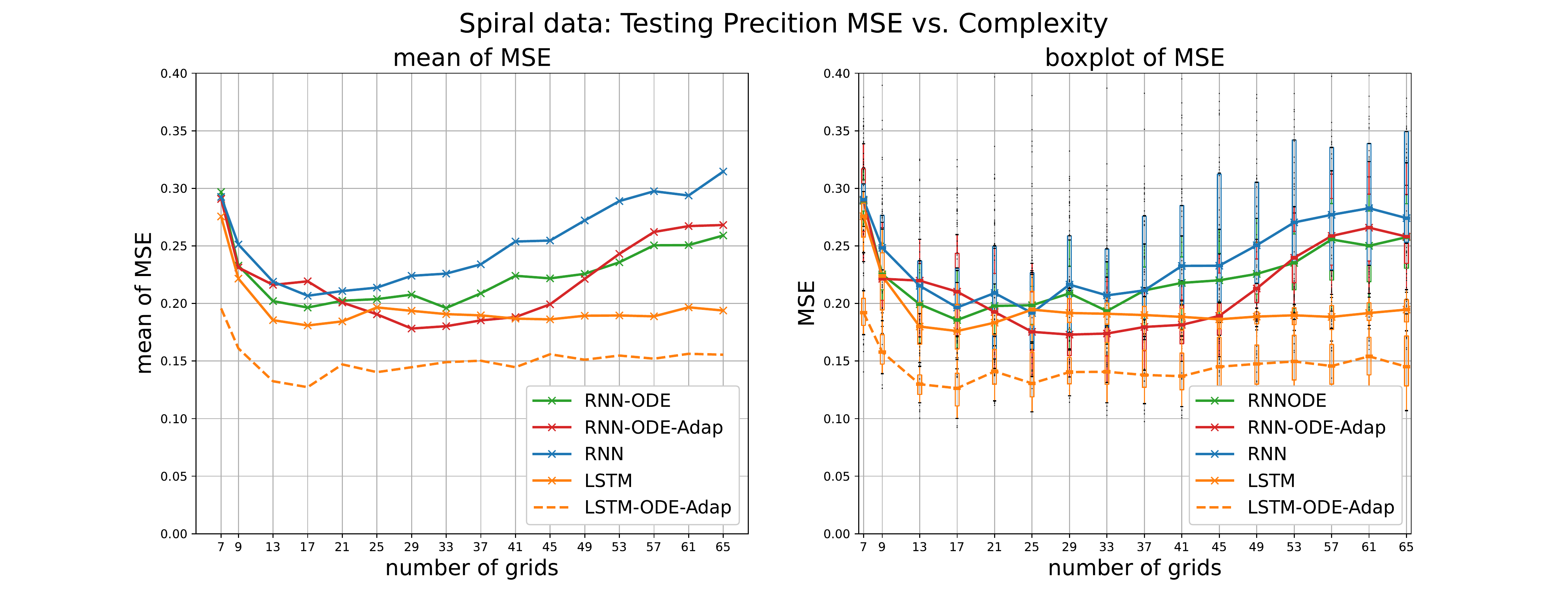}      
\caption{Comparison of prediction errors on the simulated spiral data from Eq. \ref{eq:spiral-ODE}, including the LSTM variant (plotted in the orange dashed lines). The left and right panels show the mean and boxplot of MSE, respectively. $x$ and $y$ axes have been explained in the caption of Figure \ref{fig:spiral-error}.}
\label{fig-spiral-LSTM}
\end{figure}

\begin{figure}[t]
\centering
\includegraphics[width=\textwidth]{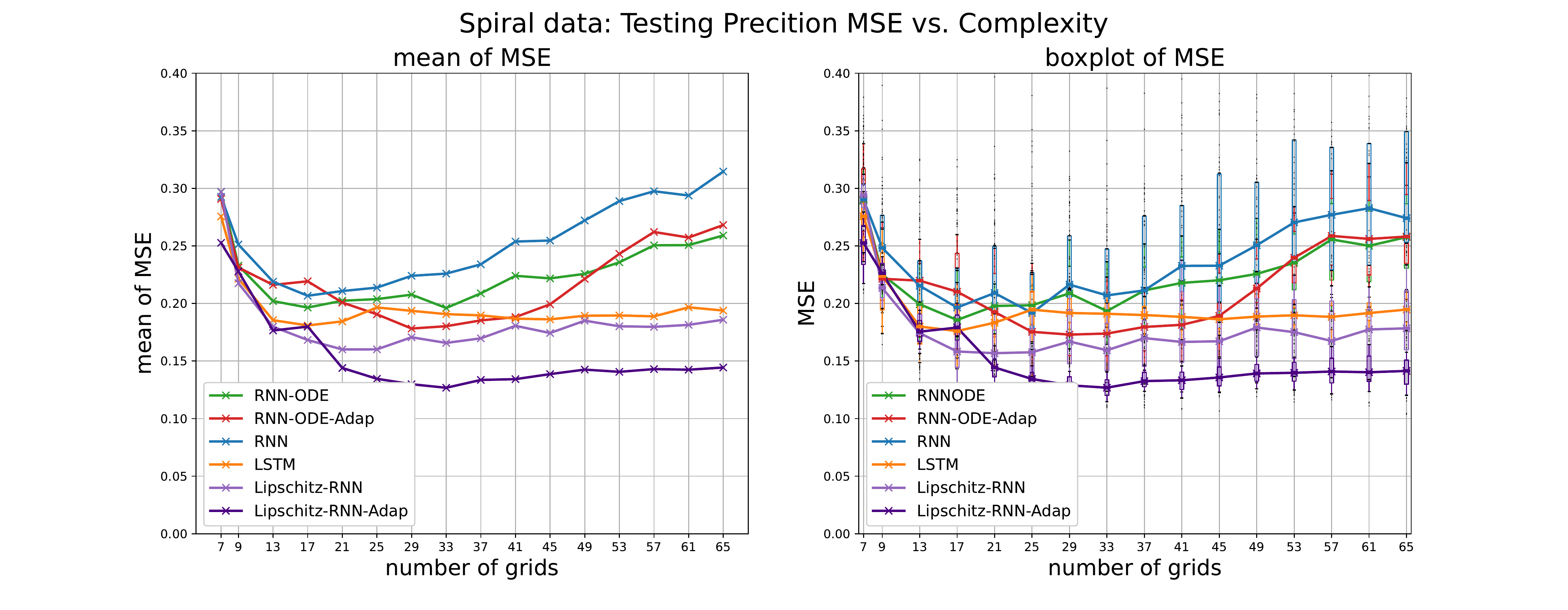}      
\caption{Comparison of prediction errors on the simulated spiral data from Eq. \ref{eq:spiral-ODE}, including the Lipschitz-RNN and its adaptive variant (plotted in the dark and light purple solid lines). The left and right panels show the mean and boxplot of MSE, respectively. $x$ and $y$ axes have been explained in the caption of Figure \ref{fig:spiral-error}.}
\label{fig-spiral-Lipschitz}
\end{figure}

The results in Figures \ref{fig-spiral-LSTM} and \ref{fig-spiral-Lipschitz} indicate that LSTM and Lipschitz-RNN with adaptive time steps achieve higher accuracy than the other models, thus validating the utility of incorporating adaptive time steps. Furthermore, this demonstrates that our proposed scheme of adaptive time steps can be easily and flexibly integrated into various time series models, leading to enhanced performance.

\paragraph{Sensitivity of LSTM to the number of parameters.} Figure \ref{fig-spiral-smaller-LSTM} shows the mean and the boxplot of the prediction errors of the models for the spiral data, including the LSTM with a similar number of parameters to that of RNN models. It can be observed that the performance of LSTMs with varying numbers of parameters is comparable and thus, the performance of LSTM is not sensitive to the number of parameters.

\begin{figure}[h]
\centering
\includegraphics[width=\textwidth]{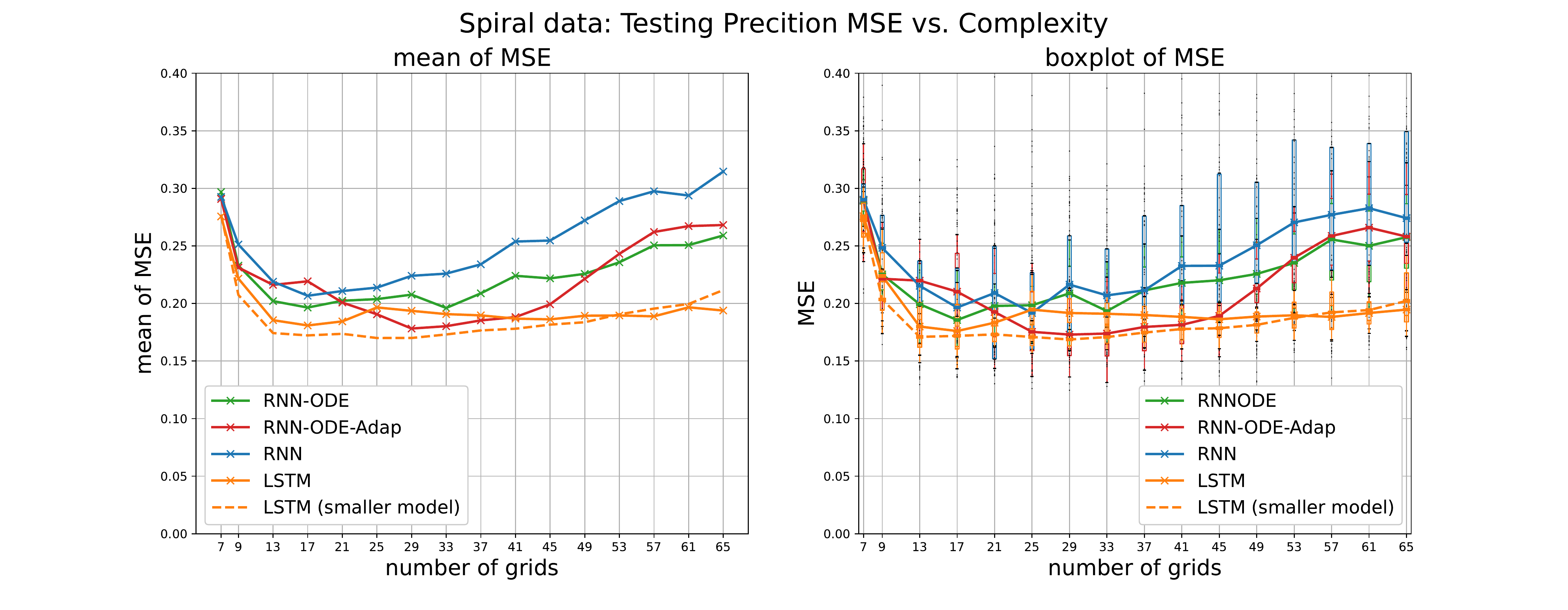}
\caption{Comparison of prediction errors on the simulated spiral data from Eq. \ref{eq:spiral-ODE}, including the LSTM with a similar number of parameters to that of RNN models (plotted in the orange dashed lines). The left and right panels show the mean and boxplot of MSE, respectively.}
\label{fig-spiral-smaller-LSTM}
\end{figure}

\paragraph{Ablation study of the time difference term in the training objective \eqref{eq:training_loss1}.} Figure \ref{fig-Hawkes-const-deltat} shows the boxplot of MSE of the models for the ablation study without the term $|t^{(\Tr,k)}_i-t^{(\Tr,k)}_{i-1}|$ on the event-type data, and RNN-ODE-Adap is plotted in a red dashed line. The results indicate that if the neural networks are trained without considering the time intervals, the models fail to fit the underlying intensity function, despite having the same network structure as before.

\begin{figure}[h]
\centering
\includegraphics[width=0.5\textwidth]{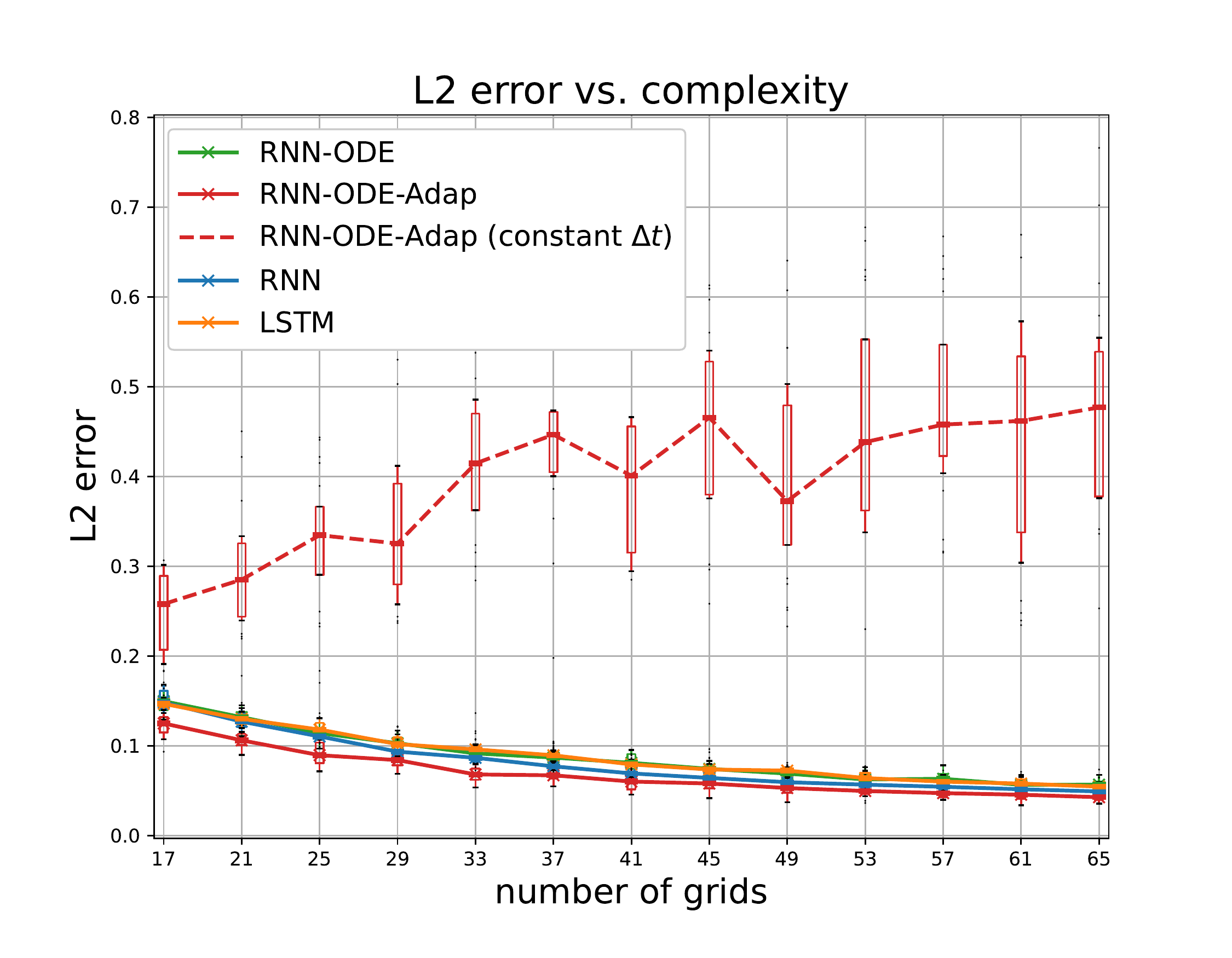}
\caption{Comparison of prediction errors for the event-type data generated from Hawkes processes, including RNN-ODE-Adap trained with constant $\Delta t$ (plotted in the red dashed line).}
\label{fig-Hawkes-const-deltat}
\end{figure}

\paragraph{Boxplot of Figure \ref{fig:ECG-error}.} Figure \ref{fig:ECG-error-boxplot} shows the boxplots of the prediction errors for ECG data, and the corresponding mean of MSE is shown in Figure \ref{fig:ECG-error}.
\begin{figure}[h]
    \centering
\includegraphics[width=0.9\linewidth]{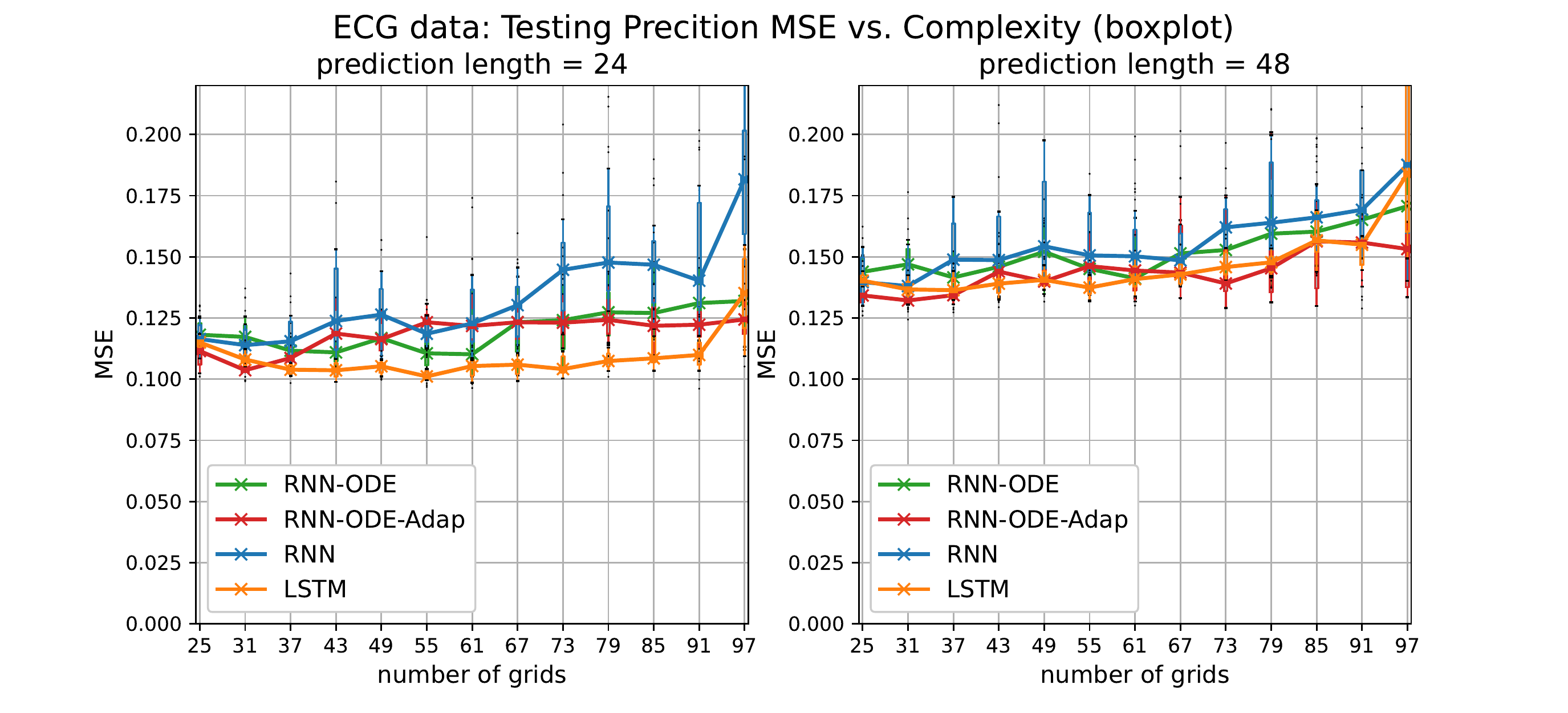}
    \caption{Boxplot of the prediction errors on the real ECG data.}
    \label{fig:ECG-error-boxplot}
\end{figure}

\paragraph{Examples of 24-step predictions for the ECG data. } Figure \ref{fig:ECG-example2} presents a comparison of 24-step ahead predictions for the testing ECG data using RNN and \modelname. The corresponding 48-step ahead predictions can be found in Figure \ref{fig:ECG-error} (b).

\begin{figure}[h]
\centering
  \includegraphics[width=0.8\linewidth]{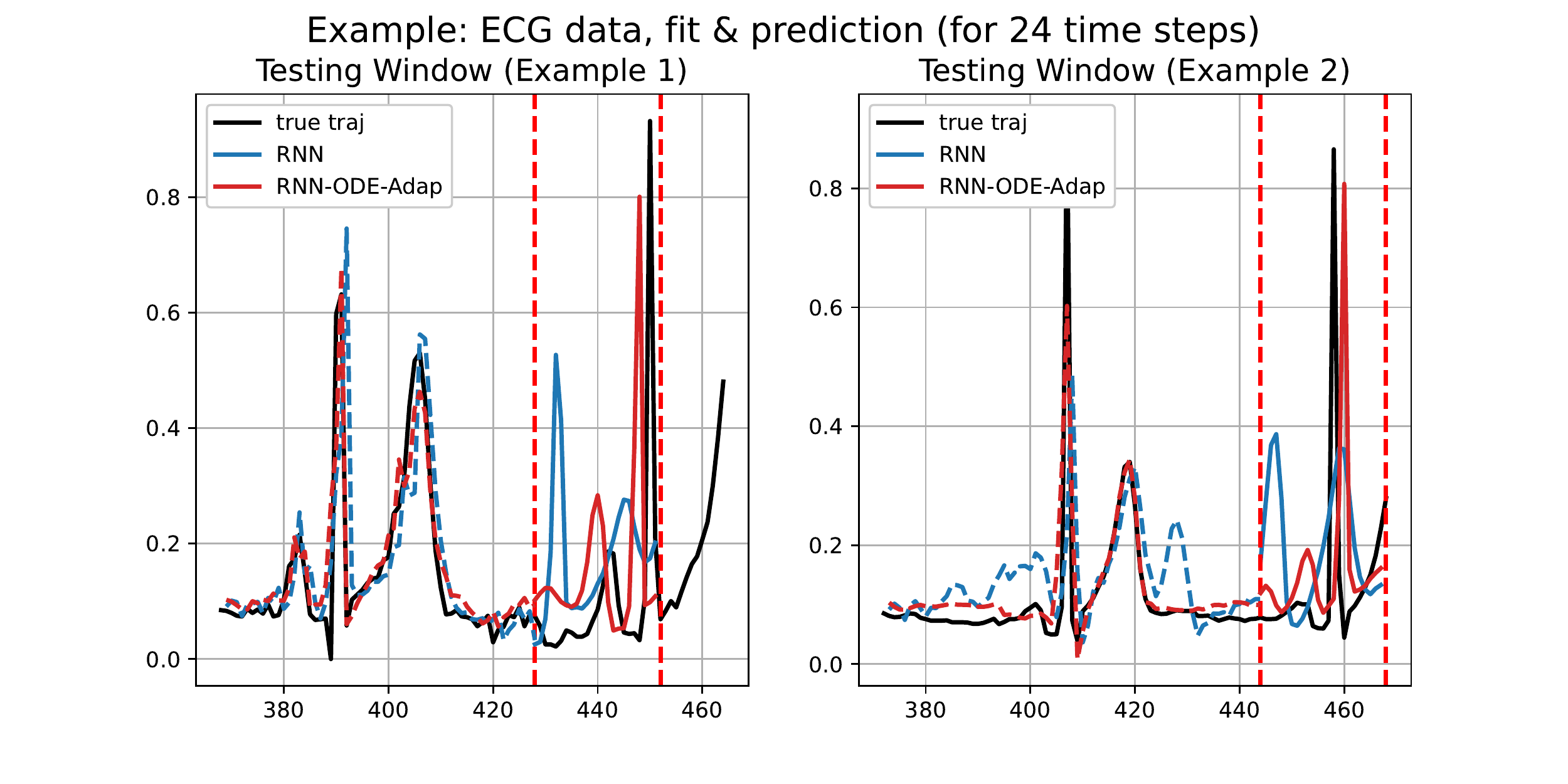}
    \caption{Examples of 24 steps ahead prediction for the testing ECG data using RNN (marked in blue) and {\modelname} (marked in red). The predicted region is marked between dashed lines.}
    \label{fig:ECG-example2}
\end{figure}

\paragraph{Comparison with LEM.} We compare our model with LEM \cite{rusch2021long_app}, which incorporates the time-adaptivity through the time modulator multiplied by the ODE function. The performance is evaluated on the simulated data generated from the FitzHugh-Nagumo system \cite{fitzhugh1955},
\[v' = v-\frac{v^3}{3}-w+I_{\text{ext}},\quad w'=\tau(v+a-bw),\]
which is a two-scale dynamical system and included as an example in \cite{rusch2021long_app}. As in \cite{rusch2021long_app}, we take $\tau=0.02, I_{\text{ext}}=0.5, a=0.7, b=0.8$, the time $t\in [0,200]$, and the initial values $(v_0,w_0)=(c_0,0)$, where $c_0\sim \mathcal{U}([-1,1])$. We rescale the system such that the time horizon is $[0,1]$ and $|v'|$ is $O(1)$. Specifically, if we formulate the original system as $y(t)' = f(y(t))$, where $y=(v,w)$, then we consider the rescaled system $\tilde{y}(\tau) = \tilde{f}(\tilde{y}(\tau))$, with $t=\beta\tau, \tilde{y}(\tau) = \alpha y(\beta\tau), \tilde{f}(\xi) = \alpha f(\frac{1}{\alpha}\xi)$. We take $\alpha=10,\beta=200$, and in this way $\tau\in [0,1]$. 

We compare the performance of the following models,
\begin{align*}
\text{LEM} &: \quad
\begin{cases}
h'(t) = \hat{\sigma}(W_2h(t)+V_2x(t)+b_2)\circ (\sigma(W_hg(t)+V_hx(t)+b_h)-h(t)),\\
g'(t) = \hat{\sigma}(W_1h(t)+V_1x(t)+b_1)\circ (\sigma(W_gh(t)+V_gx(t)+b_g)-g(t)),    
\end{cases}\\
\text{LEM-0} &: \quad
\begin{cases}
h'(t) = \sigma(W_hg(t)+V_hx(t)+b_h)-h(t),\\
g'(t) = \sigma(W_gh(t)+V_gx(t)+b_g)-g(t),    
\end{cases}\\
\text{RNN-ODE} &: \quad
\begin{cases}
h'(t) = \sigma(W_{hh}g(t) + W_{hg}g(t)+V_hx(t)+b_h),\\
g'(t) = \sigma(W_{gh}h(t) + W_{gg}g(t)+V_gx(t)+b_g),  
\end{cases}
\end{align*}
with the output
\[\hat{x}(t) = W_{x,h}h(t) + W_{x,g}g(t) + b_x.\]
Here $\hat{\sigma}(x)=0.5(1+\tanh(x/2)), h,g\in\R^{d_h}$.
Note that LEM-0 is LEM without the time modulators, and RNN-ODE possesses the vanilla RNN structure if we view the concatenated $(h(t), g(t))\in\R^{2d_h}$ as the hidden state. As in the other experiments, we still take $d_h=128$.

The length of both training and testing windows is set to $N=64$, with the windows sampled at regular time intervals defined by $t_i = \frac{i}{N-1}, i=0,\dots,N$. When integrating the ODE models, the time step difference is kept consistent with the physical time difference, specified as $\Delta t = \frac{1}{N-1}$. Consequently, the only difference between the models is the structure of the neural ODE. To compare the time adaptivity incorporated in the time modulator of LEM and the adaptive algorithm proposed in this study, we additionally present the results obtained by training LEM-0 and RNN-ODE with the adaptively chosen time steps.
\begin{table}[b]
    \centering
    % \label{tab-notations&parameters}
    \begin{threeparttable}   
    \begin{tabular}{ccccc}
    \toprule
    Model & Training data & Testing MSE \\
    \midrule
    LEM-0 & Original windows, $N=64$ & $3.54e-02 \pm 2.69e-03$ \\
    LEM-0 & Adaptive windows, $\bar{N}_a = 43$ & $3.61e-02 \pm 1.85e-03$ \\
    RNN-ODE & Original windows, $N=64$ & $2.55e-02 \pm 4.12e-03$ \\
    RNN-ODE & Adaptive windows, $\bar{N}_a = 43$ & $2.44e-02 \pm 5.56e-03$ \\
    LEM \cite{rusch2021long_app} & Original windows, $N=64$ & $3.03e-02 \pm 2.29e-03$ \\
    \bottomrule
    \end{tabular}
    \end{threeparttable}
    % \vspace{0.5em}
    \caption{MSE for one-step prediction of the models on data simulated from the FitzHugh-Nagumo system \cite{fitzhugh1955}. The presented results are from 25 replicas, and the MSE for the one-step prediction is calculated as in \eqref{eq:fit-error}. }
    \label{tab:FN-error}
\end{table}

Table \ref{tab:FN-error} presents the MSE for one-step predictions made by the models, trained either using the original training windows or those selected adaptively (Both LEM and LEM-0 are implemented utilizing the code in \cite{rusch2021long_app}). As observed from Table \ref{tab:FN-error}, the RNN-ODE models exhibit a lower error on average compared to the LEM models. The adaptive training windows slightly enhance the performance of the RNN-ODE model, while they do not improve the performance of LEM-0. LEM outperforms LEM-0 by incorporating time modulators, yet the RNN-ODE-Adap model behaves better than LEM by up to one standard deviation.

\end{document}